\documentclass{article}

\usepackage{microtype}
\usepackage{graphicx}
\usepackage{subcaption}
\usepackage{booktabs}
\usepackage{colortbl} 
\usepackage{multirow}

\usepackage{hyperref}

\usepackage[accepted]{icml2026}

\usepackage{amsmath}
\usepackage{amssymb}
\usepackage{mathtools}
\usepackage{amsthm}

\usepackage{tcolorbox}
\tcbuselibrary{skins}
\definecolor{msftBlack}{RGB}{0,0,0}
\newtcolorbox{findingBox}{
    enhanced,             
    colback=msftBlack!05, 
    colframe=msftBlack!10,
    arc=1mm,              
    boxrule=0.5pt,        
    left=2mm, right=2mm, top=2mm, bottom=2mm, 
    drop fuzzy shadow,    
    fontupper=\em,       
    notitle             
}
\newcommand{\finding}[1]{
    \begin{findingBox}
        #1
    \end{findingBox}
}
\usepackage[capitalize,noabbrev]{cleveref}

\theoremstyle{plain}
\newtheorem{theorem}{Theorem}[section]
\newtheorem{proposition}[theorem]{Proposition}

\theoremstyle{definition}

\theoremstyle{remark}

\usepackage[textsize=tiny]{todonotes}

\icmltitlerunning{Decoupling Reasoning and Confidence: Resurrecting Calibration in Reinforcement Learning from Verifiable Rewards}

\definecolor{tablesky}{RGB}{197,235,251}
\newcommand{\ours}{\rowcolor{tablesky}}

\begin{document}

\twocolumn[
  \icmltitle{Decoupling Reasoning and Confidence: \\Resurrecting Calibration in Reinforcement Learning from Verifiable Rewards}

  \icmlsetsymbol{equal}{*}

  \begin{icmlauthorlist}
    \icmlauthor{Zhengzhao Ma}{lab,sch}
    \icmlauthor{Xueru Wen}{lab,sch}
    \icmlauthor{Boxi Cao}{lab}
    \icmlauthor{Yaojie Lu}{lab}
    \icmlauthor{Hongyu Lin}{lab}
    \icmlauthor{Jinglin Yang}{IIE,SCS,NCN}
    \icmlauthor{Min He}{NCN}
    \icmlauthor{Xianpei Han}{lab,sch}
    \icmlauthor{Le Sun}{lab,sch}
  \end{icmlauthorlist}

  \icmlaffiliation{lab}{Chinese Information Processing Laboratory, Institute of Software, Chinese Academy of Sciences, Beijing, China}
  \icmlaffiliation{sch}{University of Chinese Academy of Sciences, Beijing, China}
  \icmlaffiliation{IIE}{Institute of Information Engineering, Chinese Academy of Sciences, Beijing, China}
  \icmlaffiliation{SCS}{School of Cyber Security, University of Chinese Academy of Sciences, Beijing, China} 
  \icmlaffiliation{NCN}{National Computer Network Emergency Response Technical Team/Coordination Center of China, Beijing, China} 
  \icmlcorrespondingauthor{Boxi Cao}{caoboxi@iscas.ac.cn}
  \icmlcorrespondingauthor{Min He}{hemin@cert.org.cn}
\vskip 0.3in]
\printAffiliationsAndNotice{}

\begin{abstract}
Reinforcement Learning from Verifiable Rewards (RLVR) significantly enhances large language models (LLMs) reasoning but severely suffers from calibration degeneration, where models become excessively over-confident in incorrect answers.
Previous studies devote to directly incorporating calibration objective into existing optimization target.
However, our theoretical analysis demonstrates that there exists a fundamental gradient conflict between the optimization for maximizing policy accuracy and minimizing calibration error.
Building on this insight, we propose DCPO, a simple yet effective framework that systematically decouples reasoning and calibration objectives.
Extensive experiments across mathematical reasoning and code generation benchmarks demonstrate that our DCPO not only preserves accuracy on par with GRPO but also achieves the best calibration performance and substantially mitigates the over-confidence issue. 
Our study provides valuable insights and practical solution for more reliable LLM deployment.

\end{abstract}

\section{Introduction}

Reinforcement Learning from Verifiable Rewards~\citep{lambert2024tulu} have recently emerged as a cornerstone for shaping the remarkable reasoning abilities of large language models~\citep{Deepseek-R1, jaech2024openai}. 
By optimizing policies using automatically verifiable rewards with on-policy sampling, representative RLVR algorithms such as GRPO~\citep{grpo} have achieved remarkable advances in mathematical reasoning~\citep{hu2025open}, code generation~\citep{luo2025deepcoder} and question answering~\citep{chen2025consistentchat, wen2025policy} tasks.

Despite the success, RLVR often leads to severe \emph{calibration degeneration}, emerging as a critical bottleneck that limits the practical applicability of LLMs in real-world scenarios~\citep{bereket2025uncalibrated, chhikara2025mind, kirichenko2025abstentionbench}.
In particular, RL-trained models tend to become \textbf{over-confident}, assigning extremely high confidence to their output even when the answers are incorrect.
In high-stakes domains such as healthcare, law, and finance, such over-confidence can mislead users about the system’s reliability, resulting in amplified systemic risk.~\citep{yao2025reasoning, omar2024overconfident, guan2024mitigating}
To address this challenge, previous studies~\citep{damani2025beyond, xu2024sayself, leng2024taming} attempt to jointly optimize correctness and calibration by incorporating calibration objectives into existing RL optimization targets.
Unfortunately, such paradigm often leads to ``accuracy-calibration tradeoff'', where improvements in calibration come at the expense of reasoning accuracy.
Therefore, how to reliably improve the calibration of LLMs without compromising their reasoning capability has become an urgent challenge to ensure their trustworthy deployment in real-world applications.

\begin{figure}[tp]
\centering
    \setlength{\abovecaptionskip}{0.1in} 
    \setlength{\belowcaptionskip}{-0.25in} 
    \centerline{\includegraphics[width=0.7\columnwidth]{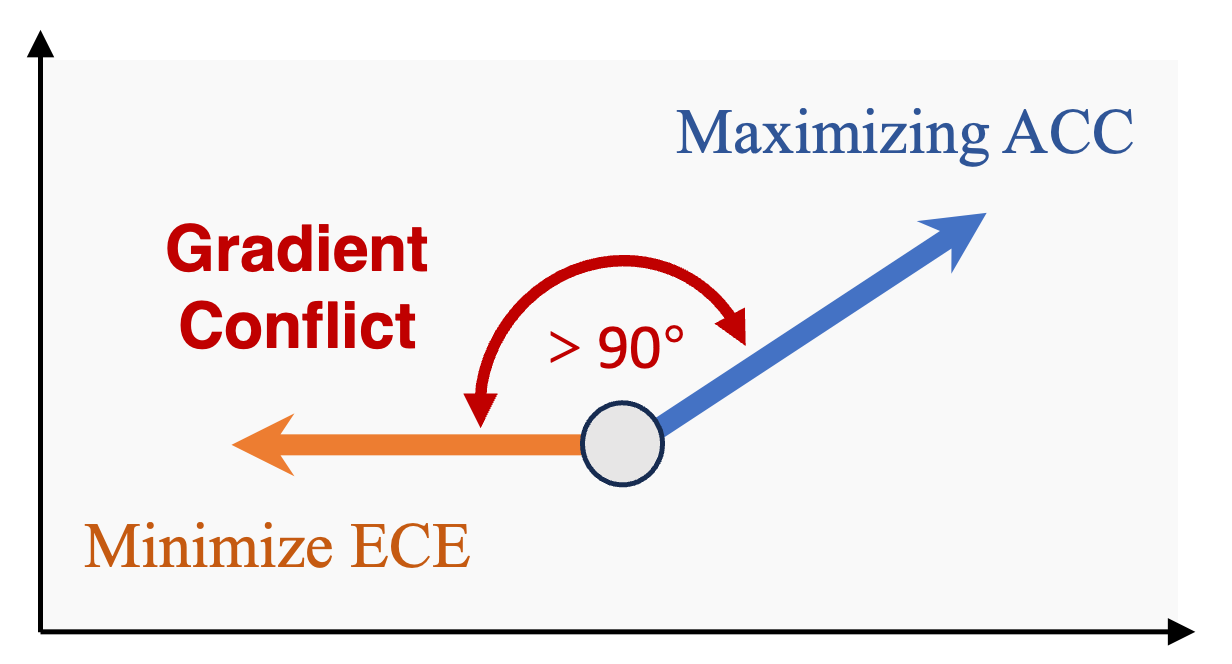}}
    \caption{Illustration of gradient conflict between policy accuracy maximization and calibration error minimization.}
    \label{Fig: gradient_conflict}
\end{figure}

\begin{figure*}
    \begin{center}
    \setlength{\abovecaptionskip}{0.1in} 
    \setlength{\belowcaptionskip}{-0.25in} 
    \includegraphics[width=0.7\textwidth]{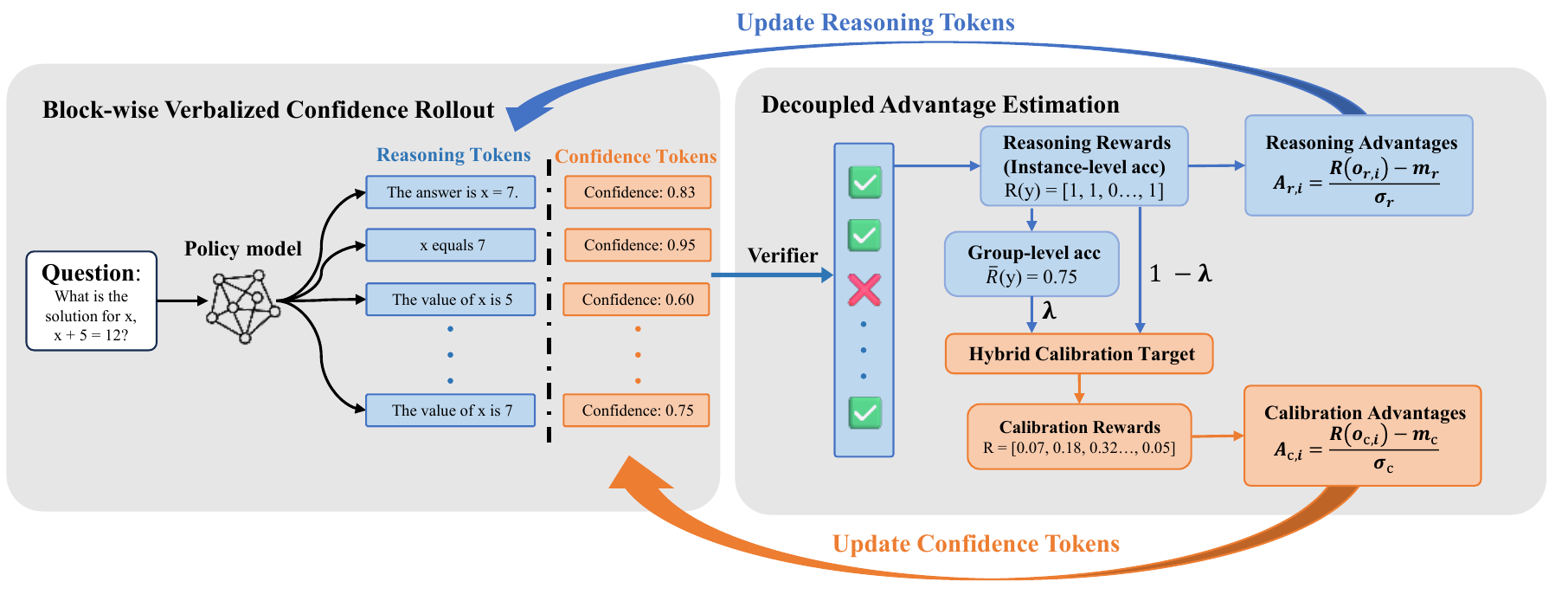}
    \caption{The overall framework of DCPO, which leverages block-wise verbalized confidence rollout and decoupled advantage estimation to decouple the optimization objectives of accuracy and calibration, and further integrates instance-level and group-level signals for more stable calibration optimization.}
    \label{fig: method}
    \end{center}
\end{figure*}

To investigate the underlying causes of the above issues in RLVR, we conduct a series of theoretical analyses\footnote{A concise theoretical analysis is provided in Section~\ref{theory}, with the complete mathematical derivation deferred to the appendix~\ref{atheory}.} of the prevailing optimization paradigms, thereby uncovering the key factors that lead to the ``accuracy–calibration tradeoff''.
Specifically, we reveal a critical gradient conflict between accuracy and calibration.
As illustrated in Figure~\ref{Fig: gradient_conflict}, for over-confident models trained with RLVR, the gradient direction for maximizing accuracy is \textit{negatively aligned} with the gradient direction for minimizing calibration error. That is, the Fisher-metric inner product between these two gradients is negative, implying an optimization conflict. 
This finding sheds light on why previous approaches suffered from the accuracy–calibration tradeoff, since naively coupling these objectives struggles to reach a Pareto-optimal state.
Moreover, we find that traditional instance-level binary supervision is insufficient for calibration optimization, as it is highly stochastic, and drives the model toward overly sharp probability distributions. 
The above analysis indicates that the key to jointly optimizing accuracy and calibration in RLVR lies in decoupling the optimization gradients and introducing more informative supervision signals.

Building on the above insights, we propose \textit{\textbf{D}ecoupled \textbf{C}alibration \textbf{P}olicy \textbf{O}ptimization}(DCPO), a simple yet effective framework that systematically decouples reasoning accuracy and calibration objectives at the levels of generation structure, reward design, and gradient optimization.
Specifically, as demonstrated in Figure~\ref{fig: method}, DCPO requires the model to explicitly verbalize its confidence after reasoning trajectory generation. 
We assign separate rewards to the reasoning tokens and the confidence tokens, and apply a masked gradient strategy to ensure that the two parts of the sequence are optimized toward distinct objectives. In this way, DCPO fundamentally avoids the accuracy–calibration gradient conflict and enables parallel improvement of both reasoning ability and confidence reliability.
Furthermore, to construct stable and low-variance supervision for calibration optimization, we exploit the group sampling mechanism inherent in GRPO~\citep{grpo}.
Specifically, we prove that, although the correctness of individual instances is not precisely distinguished, the average correctness within a rollout group provides a more stable estimate of the model’s uncertainty for a given input.
Accordingly, we supervise confidence prediction using both instance-level and group-level accuracy, without requiring any additional annotation or external oracle. This joint signal yields consistent and low-variance calibration feedback during training.

To comprehensively evaluate the effectiveness of DCPO, we conduct experiments on both mathematical reasoning and code generation tasks. 
For mathematical reasoning, we evaluate on 5 widely-used benchmarks spanning a broad range of difficulty levels, including MATH~\citep{hendrycks2021math}, AIME~\citep{aime24,aime25}, and AMC~\citep{yang2024qwen2}. 
For Code generation, we evaluate on 3 widely-used benchmarks, including LiveCodeBench~\citep{jain2024livecodebench}, and HumanEval+~\citep{liu2023your}.
The results demonstrate that, compared with previous baselines, DCPO consistently achieves the best tradeoff between reasoning performance and calibration across domains. 
Specifically, across five mathematical benchmarks, Qwen3-8B trained with DCPO achieves an average accuracy improvement of 11.8\%, matching the performance of vanilla GRPO and outperforming RLCR by 4.3\% and CCGSPG by 3.2\%. 
Meanwhile,DCPO achieves a relative Expected Calibration Error (ECE) reduction of 71.6\% compared with QWEN3-8B, decreasing it from 0.435 to 0.128.

Our main contributions are summarized as follows\footnote{The code is available at \url{https://github.com/icip-cas/DCPO}.}
\begin{itemize}
    \item We identify a fundamental mechanism underlying calibration degeneration in RLVR and formalize the gradient conflict between accuracy and calibration.
    \item We introduce DCPO, a simple yet effective framework that systematically decouples reasoning and calibration objectives.
    \item Experiments demonstrate that DCPO achieves the best tradeoff between reasoning performance and calibration compared with previous strong baselines.
\end{itemize} 

\begin{figure*}[tp]
    \setlength{\abovecaptionskip}{0.1in} 
    \setlength{\belowcaptionskip}{-0.1in}
    \centering
    \includegraphics[width = 0.8\textwidth]{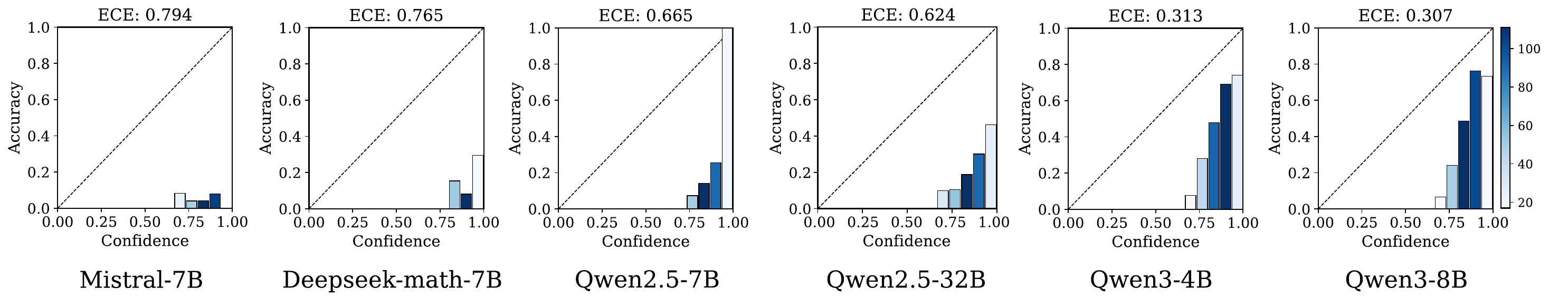}
    \caption{Reliability diagrams for different LLMs. The dashed line denotes perfect calibration; bar height indicates empirical accuracy per confidence bin, and color intensity reflects sample frequency. The Expected Calibration Error (ECE) is reported above each subplot, revealing prevalent over-confidence across models.}
    \label{fig:calibration}
\end{figure*}

\section{Preliminaries and Related Work}

This section first introduces a widely used RLVR algorithm, Group Relative Policy Optimization~\citep{grpo}, and then reviews the commonly used methods for measuring LLM confidence and calibration.

\subsection{Group Relative Policy Optimization (GRPO)}

GRPO~\citep{grpo} is an outcome-based RLVR algorithm that normalize rewards within a sampled response group.
Given a prompt $q$, the policy $\pi_\theta$ samples a group of $G$ responses $\{o_i\}_{i=1}^G$, each assigned a scalar reward $r_i$.
GRPO computes a group-relative advantage
\begin{equation}
\small
A_i = \frac{r_i - m}{\sigma},
\quad
m=\frac{1}{G}\sum_{i=1}^G r_i,
\quad
\sigma^2=\frac{1}{G}\sum_{i=1}^G (r_i-m)^2.
\end{equation}
Group normalization reduces reward scale sensitivity and provides a low-variance signal, making GRPO widely adopted in recent RLVR pipelines.

\subsection{Confidence Estimation}

Model confidence measures a language mode's self-assessed probablity that its generated answer is correct~\citep{yoon2025reasoning}.
Existing approaches can be categorized into 3 classes~\cite{moskvoretskii2025adaptive, heo2024llms}. \textbf{Token-based confidence} derives confidence from internal generation statistics like token probabilities~\citep{fomicheva2020unsupervised, kadavath2022language, li2025confidence}. While effective, it often requires additional computation and lacks interpretability.
In this work, we estimate token-based confidence with commonly used sequence probability~\cite{zheng2025group, liu2025c,fomicheva2020unsupervised, grabinski2022robust}:
 $\mathrm{Conf}(y)=\prod_{i=1}^{|y|}\pi_\theta(y_i \mid q,y_{<i})$.  
\textbf{Verbalized confidence} prompts models to explicitly output a confidence score alongside the answer~\citep{lin2022teaching, xiong2023can, yang2024verbalized}, offering a flexible and human-interpretable interface~\citep{yoon2025reasoning}. 
\textbf{Consistency-based confidence} estimates uncertainty via agreement among multiple sampled outputs~\citep{lin2023generating, ding2024retrieve}, which is considered to excel in downstream performance at the cost of additional sampling~\citep{moskvoretskii2025adaptive}.

\subsection{Calibration Estimation}

Calibration measures the alignment between predicted confidence and empirical correctness~\citep{guo2017calibration}.
A commonly used metric is Expected Calibration Error (ECE)~\cite{degroot1983comparison}, defined as
\begin{equation}\small
ECE = \sum_{m=1}^{M}\frac{|B_m|}{N}\big|a(B_m)-c(B_m)\big|,
\end{equation}
where $B_m$ denotes a confidence bin with average confidence $c(B_m)$ and accuracy $a(B_m)$.
AUROC~\citep{hanley1982meaning} is commomly used to measure the discriminative quality of confidence scores.
During reinforcement learning, ECE may decrease trivially as accuracy improves.
To better characterize over-confidence, we additionally report \emph{Positive Calibration Error (PCE)}, which restricts ECE to bins where confidence exceeds accuracy. Formally, PCE is defined as:
\begin{equation}\small
\mathrm{PCE}(c, r) =
\sum_{c(B_m) > a(B_m)}
\frac{|B_m|}{N}
\left| a(B_m) - c(B_m) \right|.
\end{equation}

\subsection{Calibration Optimization Methods}

Existing calibration optimization methods fall into 2 categories. 
\textbf{Post-hoc and inference-time methods}~\citep{wang2025sconu, li2025towards} optimize calibration independently of training, e.g., inference-time distractors~\citep{chhikara2025mind} or external confidence predictors~\citep{ni2025towards}.
\textbf{Calibration-aware RL methods}~\citep{zhou2025codapo, xu2024sayself} integrate uncertainty objectives into policy optimization.
For instance, CCGSPG~\citep{liu2025c} modifies GRPO objective according to token-based confidence, and RLCR~\citep{damani2025beyond} incorparates a Brier Score loss into RLVR rewards. These methods improve calibration but often entangle confidence learning with correctness optimization, leading to accuracy–calibration tradeoffs.

\section{Empirical Analysis for Calibration Degeneration}

In this section, we find that current LLMs consistently suffer from over-confidence issue, and this over-confidence issue is exacerbated by RLVR. Furthermore, we demonstrate that existing coupled optimization approaches cannot resolve this issue without noticeably degrading model performance.

\subsection{Over-Confidence of Current LLMs}

\begin{figure}[!tp]
    \begin{center}
    \begin{subfigure}[b]{0.60\columnwidth}
        \centering
        \includegraphics[height=4cm]{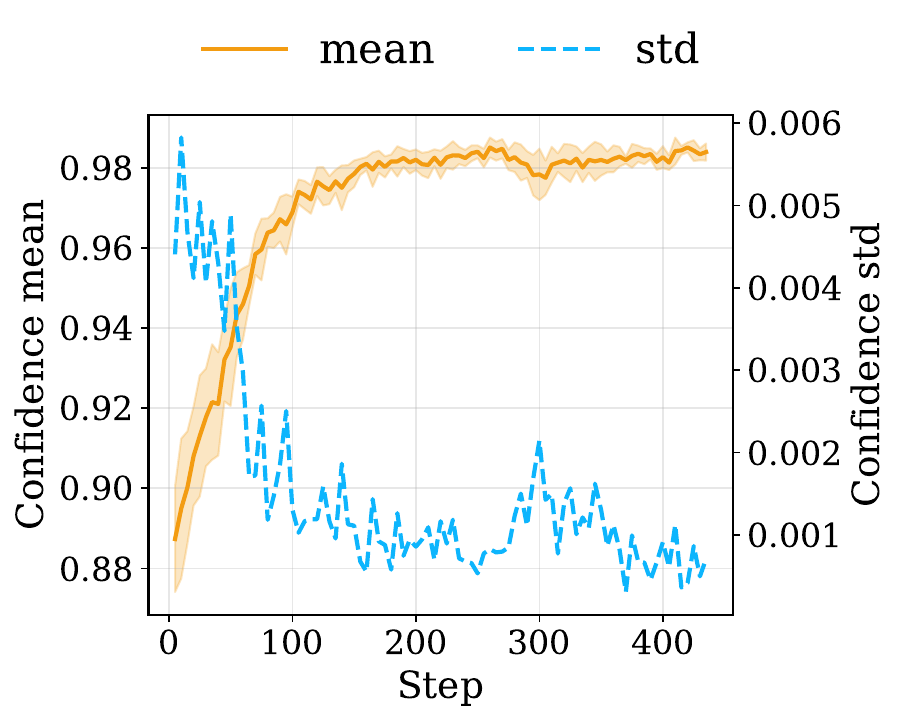}
        \caption{Confidence evolution.}
    \end{subfigure}
    \hfill
    \begin{subfigure}[b]{0.39\columnwidth}
        \centering
        \includegraphics[height=3.5cm]{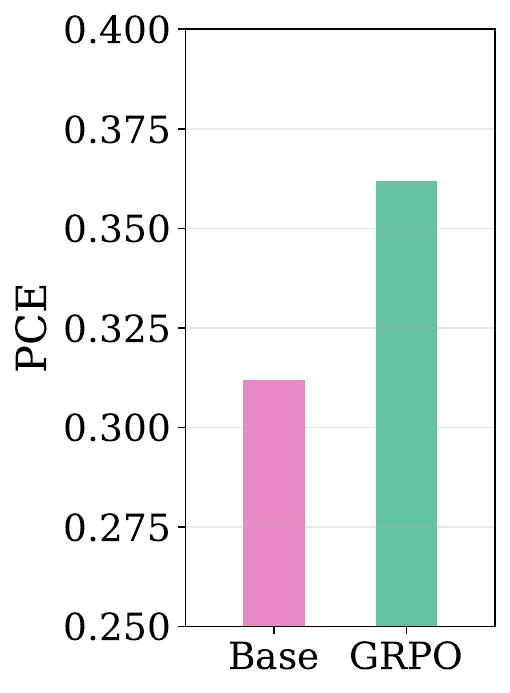}
        \caption{PCE comparison.}
    \end{subfigure}

    \caption{The impact of RLVR on LLM calibration, which demonstrate that  model confidence increases during RLVR training and RLVR exacerbates the models' over-confidence.}
    \label{grpo_confidence}
    \end{center}
\end{figure}

To systematically quantify the calibration behavior of existing LLMs, we conduct a comprehensive evaluation across different model families and scales on a suite of mathematical reasoning benchmarks. 

Figure~\ref{fig:calibration} reports the reliability diagrams of these LLMs on the combined AMC23 and AMC24 datasets. 
The results demonstrate mis-calibration is pervasive across all evaluated base models and is primarily driven by systematic over-confidence. Specifically, all models exhibit large ECE, with values consistently exceeding 0.3, indicating severe deviation from ideal calibration.
Moreover, across most confidence bins, the empirical accuracy bars lie substantially below the diagonal, showing that models frequently assign high confidence to incorrect answers.

\subsection{RLVR Leads to Calibration Degradation}

To systematically examine how reinforcement learning with verifiable rewards (RLVR) affects model calibration, we conduct GRPO training on Qwen3-8B (non-thinking)~\citep{yang2025qwen3}. 
The model is trained on the DeepScaler dataset~\citep{luo2025deepscaler}, and calibration performance is evaluated on AIME24 using 8 repeated samplings to obtain stable estimates.

Figure~\ref{grpo_confidence} summarizes the evolution of model confidence and over-confidence during RLVR training. 
Figure~\ref{grpo_confidence}(a) tracks the average predicted confidence over training steps and demonstrates a clear confidence upward trend. In particular, GRPO steadily increases the model’s average confidence from about 0.88 to above 0.98, and decrease model confidence variance from 0.006 to about 0.001.
Figure~\ref{grpo_confidence}(b) further quantifies this phenomenon using PCE, which isolates a substantial amplification of over-confidence. In particular, after GRPO training, PCE increases from 0.312 to 0.362.

Together, these results show that, although RLVR improves reasoning accuracy, it simultaneously degrades calibration by pushing the model toward excessively confident predictions, even when answers are incorrect.
This empirical evidence highlights a fundamental limitation of correctness-only RL optimization and motivates the need for calibration-aware training strategies that explicitly control confidence during reinforcement learning.

\subsection{Accuracy-Calibration Tradeoff of Coupled Optimization}

Recent calibration-aware reinforcement learning methods aim to jointly optimize reasoning accuracy and confidence calibration by directly incorporating calibration-related objectives into the RL reward or advantage function.
To study the impact of coupled optimization objectives on model performance and calibration, we compare standard GRPO with two representative coupled calibration methods, RLCR~\citep{damani2025beyond} and CCGSPG~\citep{liu2025c}, on the AIME24 benchmark.
For each method, we evaluate reasoning accuracy together with calibration metrics, including ECE and PCE, enabling a systematic analysis of the accuracy--calibration relationship under coupled optimization.

\begin{figure}[tp]
    \setlength{\abovecaptionskip}{0.1 in}
    \setlength{\belowcaptionskip}{-0.25 in}
  \begin{center}
    \includegraphics[width=0.8\columnwidth]{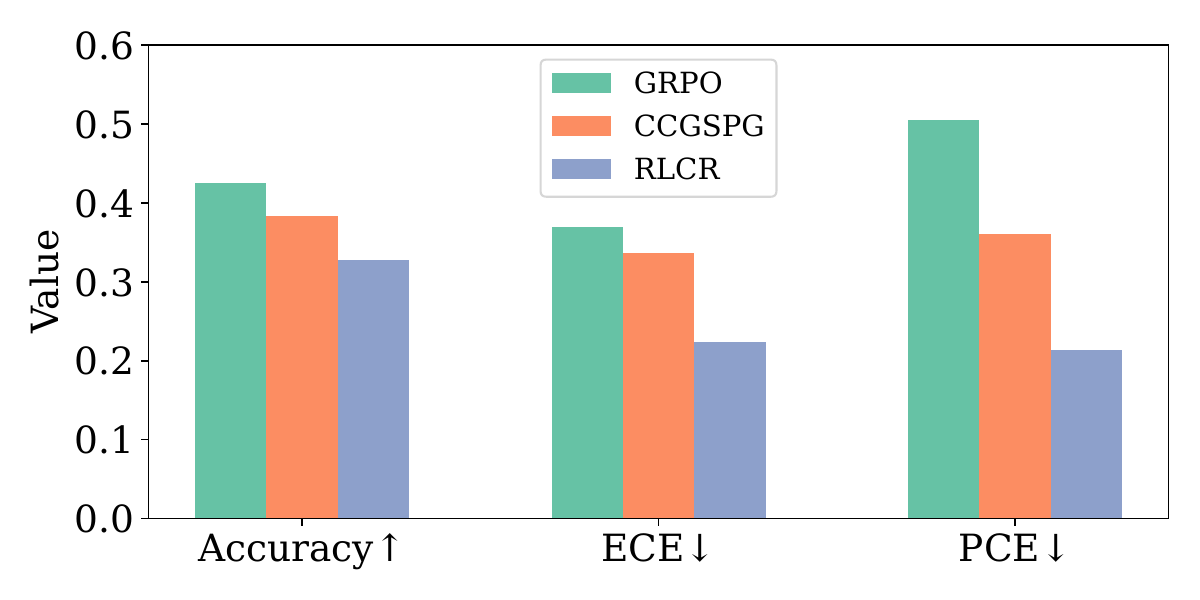}
    \caption{
    The accuracy and calibration performance of QWEN3-8B trained  with different RL methods.The figures illustrate that while existing calibration optimization methods can improve model calibration, their accuracy decreases. 
    }
    \label{accuracy_ece}
  \end{center}
\end{figure}

Figure~\ref{accuracy_ece} summarizes the results, which indicates that such coupled optimization fundamentally impact reasoning performance.
In particular, both RLCR and CCGSPG substantially reduce ECE and PCE compared to GRPO, indicating improved calibration, while both methods exhibit a noticeable reasoning accuracy performance drop relative to GRPO.
This suggests that directly coupling calibration objectives with correctness-driven optimization introduces gradient interference, where enforcing conservative confidence estimates suppresses the learning signal for correct reasoning.

\section{Theoretical Analysis}
\label{theory}

The experimental results in Section 3 imply that RLVR methods introduces severe over-confidence problem and existing coupled optimization methods exhibit accuracy–calibration tradeoff.
To dive into the underlying causes of the above issues, we firstly analyze why current RLVR exacerbates over-confidence.
Furthermore, we reveal an accuracy-calibration gradient conflict, which explains why previous coupled optimization are plagued by accuracy-calibration tradeoff. 
Finally, we show that group-level accuracy can serve as a low-variance supervision signal for calibration training, inspiring the design of the calibration reward in DCPO.
Collectively, these theoretical analyses provide a solid foundation for the design of our DCPO algorithm.
We present concise proofs in this section and detailed proofs in Appendix~\ref{atheory}.

\subsection{Trajectory-Level Reinforcement Learning Induces Over-Confidence}

\finding{Findings 1. The over-confidence phenomenon is a structural consequence of trajectory-level RLVR.}
For a fixed input $x$, let $\pi_\theta(y \mid x)$ denote the probability of generating a complete response trajectory $y$. Let $R(y)\in\{0,1\}$ be a trajectory-level correctness indicator and define
\begin{equation}\small
\mathcal{Y}^+ := \{ y \mid R(y) = 1 \}
\end{equation}
The expected accuracy objective optimized by trajectory-level reinforcement learning is
\begin{equation}\small
\label{eq:acc-obj}
J_{\mathrm{acc}}(\theta)
:= \mathbb{E}_{y\sim\pi_\theta(\cdot\mid x)}[R(y)]
= \sum_{y\in\mathcal{Y}^+} \pi_\theta(y\mid x)
\end{equation}
\begin{proposition}[Mode Collapse]
\label{prop:mode-collapse}
In the absence of explicit entropy regularization, any optimal solution to
$\max_\theta J_{\mathrm{acc}}(\theta)$
assigns probability mass $1$ to a single trajectory $y^\ast\in\mathcal{Y}^+$.
\end{proposition}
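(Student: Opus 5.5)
The plan is to lift the problem from parameter space to the space of trajectory distributions. There the objective in \eqref{eq:acc-obj} is \emph{linear} in the probability vector $p(y)=\pi_\theta(y\mid x)$, and the feasible set is (a subset of) the probability simplex $\Delta(\mathcal{Y})$. I would prove the proposition in two stages: first identify the optimal value and the optimal set in distribution space, and then characterize which elements of that set count as the optimal solutions of $\max_\theta J_{\mathrm{acc}}$ in the sense of the proposition.

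\emph{Stage one: optimal value and optimal face.} Since $R(y)\in\{0,1\}$ and $\sum_y p(y)=1$, we have $J_{\mathrm{acc}} = \sum_{y\in\mathcal{Y}^+}p(y)\le 1$. Equality holds iff $p(\mathcal{Y}\setminus\mathcal{Y}^+)=0$. So, assuming $\mathcal{Y}^+\neq\emptyset$, the optimal set in $\Delta(\mathcal{Y})$ is exactly the face $F=\mathrm{conv}\{\delta_y : y\in\mathcal{Y}^+\}$. Its extreme points are the point masses $\delta_{y^\ast}$ with $y^\ast\in\mathcal{Y}^+$.

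\emph{Stage two: realizability.} I will check that each $\delta_{y^\ast}$ is realizable by an autoregressive policy. To do so, set $\pi(\,\cdot\mid x,y^\ast_{<t})$ to be the indicator of $y^\ast_t$ at every prefix of $y^\ast$. Then the product $\prod_t\pi(y^\ast_t\mid x,y^\ast_{<t})$ equals $1$, so $J_{\mathrm{acc}}=1$ is attained. For a softmax parameterization this value is reached only in the limit $\|\theta\|\to\infty$, so I would phrase optimality as a supremum approached along a sequence. The limit points of any maximizing sequence lie in $F$.

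\emph{Stage three: the quantifier ``any'', which I expect to be the main obstacle.} Linearity alone gives that $F$ is optimal, and $F$ contains non-degenerate mixtures whenever $|\mathcal{Y}^+|\ge 2$. To obtain the proposition as stated, I would make precise the sense in which ``optimal solution'' is meant: a solution of the linear program $\max_{p\in\Delta}\langle R,p\rangle$ that is robust to arbitrarily small perturbations of the objective. The argument would proceed as follows.
\begin{itemize}
\item For a generic perturbation $R_\epsilon = R+\epsilon\xi$, where $\xi$ has distinct entries, the maximizer over $\Delta$ is unique. It is the vertex $\delta_{y^\ast}$ with $y^\ast=\arg\max_{y\in\mathcal{Y}^+}\xi(y)$ once $\epsilon$ is small.
\item Hence every optimal solution that is stable in this sense is a point mass on a single trajectory in $\mathcal{Y}^+$.
\item Equivalently, any basic (vertex) optimal solution of this linear program is such a point mass.
\end{itemize}
This is the reading under which ``in the absence of explicit entropy regularization'' matters. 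An entropy term $\beta H(p)$ would make the objective strictly concave, with a unique interior maximizer that spreads mass over $\mathcal{Y}^+$. Without that term, nothing prevents the linear program, together with any tie-breaking in the optimizer, from selecting a vertex.

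I would close with a remark making this explicit. Without the stability or vertex qualification, the precise statement is that the optimal set is $F$, whose extreme points are exactly the point masses $\delta_{y^\ast}$, $y^\ast\in\mathcal{Y}^+$. The mode-collapse conclusion concerns these extreme, generically selected optima.
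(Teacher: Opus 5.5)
Your argument is correct and is essentially the paper's: you reduce $\max_\theta J_{\mathrm{acc}}$ to a linear program over $\Delta(\mathcal{Y})$, show the optimal set is the face of distributions supported on $\mathcal{Y}^+$, and conclude for its extreme points $\delta_{y^\ast}$. The paper gets the optimal face via KKT conditions where you use the direct bound $J_{\mathrm{acc}}\le 1$, and your caveat about the quantifier ``any'' is well founded---the paper's appendix likewise asserts the claim only for optimal solutions \emph{attained at an extreme point}, so your realizability and generic-perturbation steps just make explicit what the paper leaves implicit.
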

\vspace{-1em}
\begin{proof}

    Let $\Delta(\mathcal{Y})$ denote the probability simplex over the finite or countable set of all possible trajectories $\mathcal{Y}$. Then find the optimizal solution to objective $J_{\mathrm{acc}}(\theta)$ is equivalent to solve 
    \begin{equation}\small
    \label{Delta_Y}
    \max_{p\in \Delta(\mathcal{Y})} \sum_{y \in \mathcal{Y}^{+}} p(y).  
    \end{equation}

    Obviously, objective ~\ref{Delta_Y} is a linear programming problem on $\Delta(\mathcal{Y})$, which is a convex polygonal set of $p$. By the fundamental theorem of linear programming, the optimal solution to this question is approached on a extreme point $\delta_{y^{*}}$, where $\delta_{y^{*}}(y) = \mathbb{I}[y = y^{*}]$. As $R(y) \in \{0, 1\}$, we must have $R(y^{*}) = 1$ at the optimal solution.

    A detailed proof is provided in Appendix~\ref{apdx1.1}
\end{proof}
We further note that these low-entropy solutions are stable under small perturbations of the input.
Since the policy logits are continuous functions of $x$, extreme logit margins at training inputs induce neighborhoods in input space where
\begin{equation}\small
\label{eq:ood-confidence}
\max_y \pi_\theta(y\mid x') \approx 1
\end{equation}
As correctness is discontinuous with respect to the input, this leads to over-confident yet incorrect predictions under distribution shift.

\subsection{Accuracy-Calibration Gradient Conflict}
\finding{Findings 2. The gradient direction for maximizing accuracy is negatively aligned with the gradient direction for minimizing calibration error.}

Define the model confidence prediction as
\begin{equation}\small
    \text{Conf}_{\theta}(x) := \sum_{y \in \mathcal{Y}} \pi_{\theta}(y|x)\,\phi(y),
\end{equation}
where $\phi:\mathcal{Y}\to[0,1]$ is a trajectory confidence feature. We assume the confidence and model accuracy are positively related. Formally, this is expressed as:
\begin{equation}\small
    \label{assumption}
   \operatorname{Cov}_{\pi_\theta}(R(y),\phi(y)) > 0.
\end{equation}
Consider a calibration objective of the form
\begin{equation}\small
\label{eq:cal-obj}
J_{\mathrm{cal}}(\theta)
:=
-\ell\!\left(
\mathrm{Conf}_\theta(x),\;
\mathbb{E}_{y\sim\pi_\theta}[R(y)]
\right),
\end{equation}
where $\ell$ is a proper scoring rule~\cite{gneiting2007strictly}.
\begin{proposition}[Gradient Conflict]
\label{prop:grad-conflict}
If the model is over-confident: 
$\mathrm{Conf}_\theta(x)
>
\mathbb{E}_{y\sim\pi_\theta}[R(y)],$
then the ascent direction of $J_{\mathrm{acc}}$ and $J_{\mathrm{cal}}$ satisfy
\begin{equation}\small
\big\langle
\nabla_\theta J_{\mathrm{acc}}(\theta),\;
\nabla_\theta J_{\mathrm{cal}}(\theta)
\big\rangle_{F^{-1}}
< 0.
\end{equation}
\end{proposition}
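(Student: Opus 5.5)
The plan is to compute both gradients explicitly using the score-function identity, and then evaluate their inner product in the Fisher metric. Write $p(\theta):=\mathbb{E}_{y\sim\pi_\theta}[R(y)]=J_{\mathrm{acc}}(\theta)$ and $c(\theta):=\mathrm{Conf}_\theta(x)$, and let $s(y):=\nabla_\theta\log\pi_\theta(y\mid x)$ be the score. Because $\mathbb{E}_{\pi_\theta}[s]=0$, we get
\[
\nabla_\theta p=\mathbb{E}_{\pi_\theta}\big[(R(y)-p)\,s(y)\big],\qquad
\nabla_\theta c=\mathbb{E}_{\pi_\theta}\big[(\phi(y)-c)\,s(y)\big].
\]
For the calibration term, write $\ell=\ell(c,p)$ and apply the chain rule:
\[
\nabla_\theta J_{\mathrm{cal}}=-\partial_1\ell\,\nabla_\theta c-\partial_2\ell\,\nabla_\theta p .
\]

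The next step is to pin down the signs of the partial derivatives. For a proper scoring rule used as a distance between a forecast $c$ and a target $p$, I will take $\ell(c,p)=(c-p)^2$ (the Brier case) as the working instance. The general case should follow from the standard property that $\ell(\cdot,p)$ is minimized at $c=p$ and increasing for $c>p$, together with the symmetric property in the second argument. Over-confidence means $c>p$, which gives $\partial_1\ell=2(c-p)>0$ and $\partial_2\ell=-2(c-p)<0$. Therefore
\[
\nabla_\theta J_{\mathrm{cal}}=-2(c-p)\big(\nabla_\theta c-\nabla_\theta p\big),
\]
and the target inner product becomes
\[
-2(c-p)\Big(\langle\nabla p,\nabla c\rangle_{F^{-1}}-\langle\nabla p,\nabla p\rangle_{F^{-1}}\Big).
\]

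To evaluate the Fisher inner products, I will use the fact that for a score-function gradient $g_f=\mathbb{E}[(f-\bar f)s]$ the natural-gradient geometry gives $\langle g_f,g_h\rangle_{F^{-1}}=\mathrm{Cov}_{\pi_\theta}(f,h)$. This is exact in the tabular/softmax-over-trajectories parameterization, or more generally when $f,h$ lie in the span of the score. The inner product then reduces to
\[
-2(c-p)\big(\mathrm{Cov}(R,\phi)-\mathrm{Var}(R)\big).
\]

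\textbf{Main obstacle.} This expression is negative exactly when $\mathrm{Cov}(R,\phi)>\mathrm{Var}(R)$, which is stronger than assumption~\eqref{assumption} alone. I see two ways to proceed. The first is to read the paper's intent as the calibration objective acting only on the confidence channel, with $p$ treated as a fixed target (no gradient through it, as in a stop-gradient). In that case $\nabla J_{\mathrm{cal}}=-\partial_1\ell\,\nabla c$, and the inner product is $-\partial_1\ell\cdot\mathrm{Cov}(R,\phi)<0$, which follows directly from $c>p$ and \eqref{assumption}. The second is to keep the full chain rule and impose $\mathrm{Cov}(R,\phi)>\mathrm{Var}(R)$ as a strengthened positivity condition. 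I will present the stop-gradient version as the proof, and note the full-gradient version with its extra condition as a remark.

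The remaining technical point is justifying the identity $\langle\nabla p,\nabla c\rangle_{F^{-1}}=\mathrm{Cov}(R,\phi)$. In the tabular case this follows from $F=\mathrm{diag}(\pi)-\pi\pi^\top$ restricted to the tangent space of the simplex. In general I will argue via the projection of $R$ and $\phi$ onto the span of the score, which is equality under the usual realizability assumption on the parameterization.
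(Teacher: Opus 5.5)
Your stop-gradient argument is essentially the paper's own proof: the appendix likewise writes $\nabla_\theta J_{\mathrm{cal}}=-\frac{\partial \ell}{\partial c}\,\nabla_\theta\mathrm{Conf}_\theta(x)$, silently dropping the chain-rule term from the $\theta$-dependence of the target $\mathbb{E}_{\pi_\theta}[R]$, and then simply asserts $\langle\nabla_\theta J_{\mathrm{acc}},\nabla_\theta\mathrm{Conf}_\theta(x)\rangle_{F^{-1}}=\operatorname{Cov}_{\pi_\theta}(R,\phi)$, which you justify more carefully in the tabular/realizable case. One correction to your closing remark: since $R\in\{0,1\}$ and $\phi\in[0,1]$, $\operatorname{Cov}_{\pi_\theta}(R,\phi)=p(1-p)\bigl(\mathbb{E}[\phi\mid R=1]-\mathbb{E}[\phi\mid R=0]\bigr)\le p(1-p)=\operatorname{Var}(R)$, and equality would force $\phi=R$ a.s.\ and hence $c=p$; so your strengthened condition can never hold, and in your Brier instance the full-chain-rule inner product $2(c-p)\bigl(\operatorname{Var}(R)-\operatorname{Cov}(R,\phi)\bigr)$ is actually strictly positive under the stated hypotheses. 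The stop-gradient reading is therefore not a convenience but the only reading under which the proposition holds.
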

\vspace{-1em}
\begin{proof}

Firstly, we compute these two gradients as:
\begin{equation}\small
    \label{acc_gradient}
    \nabla_{\theta}J_{\mathrm{acc}}(\theta) = \mathbb{E}_{y\sim \pi_{\theta}}[(R - \mathbb{E}[R])g]
\end{equation}
\begin{equation}\small
    \label{cal_gradient}
    \nabla_{\theta}J_{\mathrm{cal}}(\theta) = - \frac{\partial l}{\partial c}\mathbb{E}_{y\sim \pi_{\theta}}[(\phi - \mathbb{E}[\phi])g]
\end{equation}

Then we derive the Fisher-metric inner product of these two gradients as:
\begin{equation}\small
   \bigl\langle
\nabla_\theta J_{\mathrm{acc}}(\theta),\,
\nabla_\theta J_{\mathrm{cal}}(\theta)
\bigr\rangle_{F^{-1}}
=
-\,\frac{\partial l}{\partial c}\,
\operatorname{Cov}_{\pi_\theta}(R(y),\phi(y)).
\end{equation}

Which is strictly negative due to assumption ~\ref{assumption}.

A detailed proof is provided in Appendix~\ref{apdx1.2}
\end{proof}
\vspace{-0.5em}

\subsection{Group-Level Accuracy as Low-Variance Supervision}
\label{group_theory}
\finding{Findings 3. The average correctness within a rollout group provides a more stable estimate of the model’s uncertainty for a given input.}

Consider the group-level accuracy estimator
\begin{equation}\small
\label{eq:group-acc}
\tilde{R}_G := \frac{1}{G} \sum_{i=1}^G R(y_i).
\end{equation}
\vspace{-1em}
\begin{proposition}
\label{prop:group-unbiased}
$\tilde{R}_G$ is an unbiased estimator of $\mathbb{E}[R(y)]$ with variance $O(1/G)$.
\end{proposition}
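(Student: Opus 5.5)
The plan is to treat the $G$ rollouts $y_1,\dots,y_G$ in a GRPO group as i.i.d.\ draws from $\pi_\theta(\cdot\mid x)$ for the fixed prompt $x$. This is exactly how GRPO samples a group, so the assumption fits the setting. Write $p := \mathbb{E}_{y\sim\pi_\theta(\cdot\mid x)}[R(y)] = J_{\mathrm{acc}}(\theta)$, as in \eqref{eq:acc-obj}. Since $R(y)\in\{0,1\}$, each $R(y_i)$ is a Bernoulli$(p)$ random variable, and $\tilde{R}_G$ in \eqref{eq:group-acc} is simply a normalized Binomial$(G,p)$ count.

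First, unbiasedness follows from linearity of expectation:
\begin{equation}\small
\mathbb{E}[\tilde R_G]=\frac{1}{G}\sum_{i=1}^G \mathbb{E}[R(y_i)] = p .
\end{equation}
Second, for the variance, independence of the rollouts makes all cross-covariances $\operatorname{Cov}(R(y_i),R(y_j))$ with $i\neq j$ vanish. Therefore
\begin{equation}\small
\operatorname{Var}(\tilde R_G)=\frac{1}{G^2}\sum_{i=1}^G \operatorname{Var}(R(y_i))=\frac{p(1-p)}{G}\le \frac{1}{4G},
\end{equation}
which is $O(1/G)$ uniformly in $\theta$ and $x$. To connect this with the role the estimator plays in DCPO, I will contrast it with the single-instance signal $R(y_i)$. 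That signal is also unbiased for $p$, but its variance is $p(1-p)$, which does not shrink with $G$. The group estimator therefore reduces variance by a factor of exactly $G$, which is the sense in which Findings 3 calls it a ``more stable'' supervision target for confidence.

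The only step needing care is the i.i.d.\ assumption. If the rollouts are sampled with a shared random seed or some other dependence, the cross terms remain. In that case I would fall back on the general bound
\begin{equation}\small
\operatorname{Var}(\tilde R_G)\le \frac{p(1-p)}{G}+\frac{G-1}{G}\,\bar\rho\, p(1-p),
\end{equation}
where $\bar\rho$ is the average pairwise correlation between rollouts. This bound still gives $O(1/G)$ whenever $\bar\rho$ is $O(1/G)$, and in particular under standard independent sampling, where $\bar\rho=0$.
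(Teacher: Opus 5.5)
Your proposal is correct and follows essentially the same route as the paper's proof: you assume independent rollouts, get unbiasedness from linearity of expectation, and use independence of the $G$ Bernoulli$(p)$ indicators to obtain $\operatorname{Var}(\tilde R_G)=\operatorname{Var}(R(y))/G=p(1-p)/G$. The uniform bound $p(1-p)/G\le 1/(4G)$ and your correlated-sampling remark go beyond the paper but are harmless; with identical Bernoulli$(p)$ marginals that remark is in fact an exact identity rather than just an upper bound.
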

\vspace{-1em}
\begin{proof}
By the definition of $\tilde{R}_G$, the unbiaseness follows:
\begin{equation}
    \label{unbias}
    \mathbb{E}[\tilde{R}_G] = \mathbb{E}[R(y)]
\end{equation}
and the variance decay follows:
\begin{equation}
    \label{unbias_1}
    Var(\tilde{R}_{G}) = \frac{1}{G}Var(R(y)).
\end{equation} 

A detailed proof is provided in Appendix~\ref{apdx1.4}
\end{proof}

\begin{proposition}
\label{prop:group-var}
When using the absolute calibration loss $\ell(c, r) = |c - r|$,  using $\tilde{R}_G$ as supervision reduces gradient variance compared to instance-level correctness.
\end{proposition}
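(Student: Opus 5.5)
We want to compare the stochastic gradient of the confidence head under the absolute loss $\ell(c,r)=|c-r|$ in two cases: the target is the instance label $R(y_i)\in\{0,1\}$, or the target is the group average $\tilde R_G$. The confidence $c=\mathrm{Conf}_\theta(x)$ is treated as fixed while the target is sampled. The derivative with respect to $c$ is $\partial_c\ell = \operatorname{sign}(c-r)$, so the parameter gradient factors as
\[
\nabla_\theta \ell(c_\theta,r)=\operatorname{sign}(c_\theta-r)\,\nabla_\theta c_\theta .
\]
Since $\nabla_\theta c_\theta$ does not depend on the sampled target, it is enough to compare $\operatorname{Var}\big(\operatorname{sign}(c-R(y))\big)$ with $\operatorname{Var}\big(\operatorname{sign}(c-\tilde R_G)\big)$. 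Write $p:=\mathbb{E}[R(y)]$. Proposition~\ref{prop:group-unbiased} tells us $\tilde R_G$ is unbiased for $p$ with variance $p(1-p)/G$, and this concentration drives the argument.

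\textbf{Instance-level case.} For $c\in(0,1)$, the sign variable $s_1=\operatorname{sign}(c-R)$ equals $+1$ with probability $1-p$ and $-1$ with probability $p$. Hence
\[
\operatorname{Var}(s_1)=4p(1-p).
\]
This does not shrink as $G$ grows. It is the maximal Bernoulli-type noise, and it is the "highly stochastic" signal described in the introduction.

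\textbf{Group-level case.} Let $s_G=\operatorname{sign}(c-\tilde R_G)$, which takes values in $\{-1,0,1\}$. Then
\[
\operatorname{Var}(s_G)\le 4\,q(1-q),\qquad q:=\Pr(\tilde R_G> c).
\]
Suppose $c\neq p$, which is the relevant regime: by the over-confidence hypothesis of Proposition~\ref{prop:grad-conflict}, $c>p$. Chebyshev, using the variance from Proposition~\ref{prop:group-unbiased}, gives $q\le p(1-p)/(G(c-p)^2)$. Hoeffding gives the sharper bound $q\le e^{-2G(c-p)^2}$. Either way $q\to 0$, so $\operatorname{Var}(s_G)\to 0$. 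In particular $\operatorname{Var}(s_G)<\operatorname{Var}(s_1)$ as soon as $q(1-q)<p(1-p)$. Under the Chebyshev bound this holds once $G>1/(c-p)^2$, provided $q<\min(p,1-p)$.

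To get a cleaner statement that holds for every $G\ge1$, I would alternatively use the squared-error surrogate, or the subgradient averaged over the group. The group-averaged per-sample gradient in the instance-level scheme, $\frac1G\sum_i\operatorname{sign}(c-R(y_i))$, has variance $4p(1-p)/G$, which already matches the Proposition~\ref{prop:group-unbiased} rate. The fair comparison, however, is per-sample supervision, where each rollout's confidence token is trained on its own target. In that setting the instance targets give variance $4p(1-p)$, while every rollout receiving $\tilde R_G$ gives variance at most $4q(1-q)$.

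\textbf{Main obstacle.} The sign function is discontinuous, so variance reduction does not follow from $\operatorname{Var}(\tilde R_G)=\operatorname{Var}(R)/G$ alone. The crux is the anti-concentration/concentration step: bounding $\Pr(\tilde R_G>c)$ away from $1/2$ when $c$ is separated from $p$. I would state the result under $|c-p|>0$ and use Hoeffding. I would also remark that in the degenerate case $c=p$ neither signal is informative, and the claimed reduction should be read for $G$ large relative to $(c-p)^{-2}$.
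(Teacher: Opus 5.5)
Your route differs from the paper's, and one of your inequalities is wrong as written.

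\textbf{Comparison with the paper.} The paper never treats $\tilde R_G$ as random. It sets $p := \tilde R_G$, identifying the group mean with the success probability; equivalently, it compares variances within a fixed group. Then for $c\in(0,1)$, $c\neq p$, the instance-level subgradient is $\pm1$ with variance $4p(1-p)$, while $\operatorname{sign}(c-\tilde R_G)$ is a constant with variance $0$. The reduction therefore holds for every $G$ and needs no concentration.

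You keep the sampling noise of $\tilde R_G$ and measure per-rollout variance unconditionally over the draw of the group. The paper's statement is the $G\to\infty$ idealization of yours. What you gain is an honest account of that noise. Your large-$G$ hedge is genuinely necessary under your reading, not an artifact of your method. Take $G=2$, $p=0.1$, $c=0.4$, an over-confident configuration: $\Pr(\tilde R_G>c)=0.19$ and there is no atom at $c$, so $\operatorname{Var}(s_G)=4(0.19)(0.81)\approx 0.62>0.36=\operatorname{Var}(s_1)$. So unconditionally the proposition is false without a hypothesis like $G(c-p)^2$ large. What you lose is the clean, all-$G$ (but within-group) statement the paper proves.

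\textbf{The faulty step.} The bound $\operatorname{Var}(s_G)\le 4q(1-q)$ with $q=\Pr(\tilde R_G>c)$ fails when $c\in\{1/G,\dots,(G-1)/G\}$. In that case $s_G$ takes the value $0$ (or any subgradient value in $[-1,1]$) with positive probability. For $G=2$, $p=0.1$, $c=1/2$ you get $q=0.01$ and $4q(1-q)\approx 0.04$, but $\operatorname{Var}(s_G)=0.18$.

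Replace it with $\operatorname{Var}(s_G)\le\mathbb{E}[(s_G-1)^2]\le 4\Pr(\tilde R_G\ge c)$, which holds for any subgradient selection. Chebyshev then gives $\operatorname{Var}(s_G)\le 4p(1-p)/\bigl(G(c-p)^2\bigr)$. For $p\in(0,1)$ this is strictly below $4p(1-p)$ as soon as $G(c-p)^2>1$.

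This also makes your proviso $q<\min(p,1-p)$ superfluous. That proviso was already redundant next to the condition on $G$, since on its own it implies $q(1-q)<p(1-p)$ for every $G$.
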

\vspace{-1em}
\begin{proof}
    We consider $R(y)$ as i.i.d Bernoulli random variable with expectation $p = \tilde{R}_G$.
    Let $g(c, r)$ denote an arbitrary measurable selection from $\partial_{c}|c- r|$. For any such selecetion, we have $|g(c, r)| \leq 1$. When $c\in (0, 1)$ and $c \neq \tilde{R}_G$, we have: 
    \begin{equation}
        Var(g(c, R(y))) = 4p(p-1).
    \end{equation}
    \begin{equation}
        Var(g(c,\tilde{R}_G)) = 0.
    \end{equation}
    Thus, $Var(g(c, R(y))) \geq  Var(g(c,\tilde{R}_G))$, and the equality is approached iff $p\in\{0, 1\}$.

    A detailed proof is provided in Appendix~\ref{apdx1.4}
\end{proof}

\section{Decoupled Calibration Policy Optimization}

To address the accuracy-calibration tradeoff induced by RLVR, based on the above analysis, we propose \textbf{DCPO} (\textbf{D}ecoupled \textbf{C}alibration \textbf{P}olicy \textbf{O}ptimization), a decoupled reward modeling framework for calibrated LLM training. 

As shown in Figure ~\ref{fig: method}, our approach separates the optimization of reasoning quality and confidence estimation by assigning distinct rewards and advantages to different segments of the model output. 
To provide a low-variance supervision signal for confidence calibration, we leverage the group sampling mechanism in GRPO, enabling stable and effective training without sacrificing task performance.

\subsection{Block-wise Verbalized Confidence Rollout}

To explicitly model uncertainty during reinforcement learning, we adopt a block-wise verbalized confidence rollout that separates the model output into a reasoning block and a confidence block. 
Specifically, given an input $q$, we prompt the model to generate a response $o$ in the following structured form $o = [\, o_r \;\; \texttt{<conf>} \;\; o_c \,]$, where $o_r$ contains the reasoning process and final answer, and $o_c$ is a scalar confidence prediction, with the two segments separated by a special delimiter token \texttt{<conf>}.

\subsection{Decoupled Advantage Estimation}

To optimize model calibration independently of accuracy, we design decoupled advantage estimators for reasoning and calibration.

For the reasoning component, we employ a standard outcome-based reward defined by accuracy:
\begin{equation}\small
R(o_r) = \mathbb{I}(y_{\text{pred}} \equiv y_{\text{label}}),
\end{equation}
where $y_{\text{pred}}$ is extracted from $o_r$.

Based on the analysis in ~\ref{group_theory}, we exploit the group sampling mechanism inherent in GRPO and introduce a group-level accuracy $\tilde{R}_G = \frac{1}{G}\sum_{i=1}^{G} R(o_{r,i})$ as a calibration target.

To balance stability and expressiveness, we introduce a hybrid calibration target that interpolates between group-level and instance-level accuracy:
\begin{equation}\small
R_{IG} = \lambda \cdot \tilde{R}_G + (1 - \lambda) \cdot R(o_{r}),
\end{equation}
where $\lambda \in [0,1]$ controls the tradeoff between variance reduction and sample-level discrimination.
Accordingly, we define the confidence reward as
\begin{equation}\small
R_c(o_c) = - \left| \text{confidence}(o_c) - R_{IG} \right|.
\end{equation}
To ensure structural compliance, we apply a penalty when the model fails to follow the prescribed output format.

The resulting accuracy and confidence advantage are defined as:
\begin{equation}\small
A_{r,i} = \frac{R(o_{r,i}) - m_r}{\sigma_r},  A_{c,i} = \frac{R_c(o_{c,i}) - m_c}{\sigma_c},
\end{equation}
where $m_r, m_c$ and $\sigma_r, \sigma_c$ denote the mean and standard deviation of corresponding rewards within the group.

\begin{table*}[t]
\centering
\caption{
Accuracy and calibration evaluation on 6 mathematical reasoning benchmarks.
We report Accuracy (Acc) for reasoning performance, ECE, PCE and AUROC for calibration estimation.
DCPO-I and DCPO-G denote variants that use only instance-level accuracy and only group-level accuracy, respectively, as calibration optimization signals.
Bold and underlined values indicate the best and second-best results in each column, respectively.
}
\resizebox{\textwidth}{!}{
\begin{tabular}{lc cccc cccc cccc}
\toprule
\multirow{2}{*}{Method} & \multirow{2}{*}{Confidence}
& \multicolumn{4}{c}{MATH-500} 
& \multicolumn{4}{c}{AIME24}
& \multicolumn{4}{c}{AIME25}\\
\cmidrule(lr){3-6} \cmidrule(lr){7-10} \cmidrule(lr){11-14}
& &  Acc$\uparrow$ & ECE$\downarrow$ & PCE$\downarrow$ & AUROC$\uparrow$
& Acc$\uparrow$ & ECE$\downarrow$ & PCE$\downarrow$ & AUROC$\uparrow$
& Acc$\uparrow$ & ECE$\downarrow$ & PCE$\downarrow$ & AUROC$\uparrow$\\
\midrule
Base & logtis
& 84.7 & 0.075 & 0.078 & 0.743
& 27.5 & 0.461 & 0.431 & 0.750
& 20.0 & 0.486 & 0.483 & 0.796 \\

Base & verbal
& 82.8 & 0.115 & 0.114 & 0.566
& 25.8 & 0.534 & 0.550 & 0.674
& 18.3 & 0.665 & 0.624 & 0.612\\

GRPO & logtis
& \underline{90.4} & 0.054 & 0.104 & 0.726
& \textbf{42.5} & 0.370 & 0.505 & 0.736
& \underline{29.2} & 0.469 & 0.556 & 0.742\\

GRPO & verbal
& 88.0 & 0.105 & 0.104 & 0.475
& 40.0 & 0.515 & 0.510 & 0.556
& 28.3 & 0.593 & 0.593 & 0.568 \\

\midrule

ConfClass & -
& 90.4 & 0.102 & 0.079 & 0.524
& 42.5 & 0.363 & 0.297 & 0.642
& 29.2 & 0.401 & 0.352 & 0.523\\

 RLCR & verbal
& 90.2 & 0.051 &  0.047 & 0.782
& 32.8 & {0.224} &  0.214 & 0.823
& 24.1 & \underline{0.134} & \underline{0.138} & 0.621 \\

CCGSPG & logtis
& 88.6 & 0.057 & 0.082 & 0.853
& 38.3 & 0.337 & 0.360 & 0.833
& 27.5 & 0.457 & 0.459 & 0.794 \\

\midrule

\ours DCPO & verbal
& 90.2 & \underline{0.049} &  \underline{0.033} & {0.861}
& \underline{41.6} & \underline{0.188} &  \underline{0.212} & \textbf{0.914}
& 28.3 & \textbf{0.130} & \textbf{0.133} & \textbf{0.951}\\

\ours DCPO-G & verbal
& \textbf{90.8} & \textbf{0.038} & \textbf{0.029} & \textbf{0.874}
& \underline{41.6} & 0.330 & 0.307 & \underline{0.852}
& \textbf{30.0} & 0.433 & 0.418 & \underline{0.836}\\

\ours DCPO-I & verbal
& 89.7 & 0.068 & 0.037 & \underline{0.864} 
& 40.0 & \textbf{0.170} & \textbf{0.178} & 0.831
& 28.3 & 0.141 & 0.146 & 0.723\\

\bottomrule
\end{tabular}
}
\resizebox{\textwidth}{!}{
\begin{tabular}{lc cccc cccc cccc}
\toprule
\multirow{2}{*}{Method} & \multirow{2}{*}{Confidence}
& \multicolumn{4}{c}{AMC23} 
& \multicolumn{4}{c}{AMC24}
& \multicolumn{4}{c}{\textbf{Overall Performance}}\\
\cmidrule(lr){3-6} \cmidrule(lr){7-10} \cmidrule(lr){11-14}
& &  Acc$\uparrow$ & ECE$\downarrow$ & PCE$\downarrow$ & AUROC$\uparrow$
& Acc$\uparrow$ & ECE$\downarrow$ & PCE$\downarrow$ & AUROC$\uparrow$
& Acc$\uparrow$ & ECE$\downarrow$ & PCE$\downarrow$ & AUROC$\uparrow$\\
\midrule

Base & logtis
& 63.2 & 0.204 & 0.255 & 0.832
& 49.4 & 0.329 & 0.315 & 0.759
& 49.0 & 0.311 & 0.312 & 0.796 \\

Base & verbal
& 61.8 & 0.370 & 0.344 & 0.567
& 43.3 & 0.492 & 0.498 & 0.627
& 46.4 & 0.435 & 0.426 & 0.609\\

GRPO & logtis
&  \underline{76.1} & 0.160 & 0.277 & \textbf{0.850}
&  63.9 & 0.264 & 0.370 & 0.693
&  {60.4} & 0.248 & 0.362 & 0.749\\

GRPO & verbal
& 72.0 & 0.277 & 0.259 & 0.519
& 58.9 & 0.370 & 0.350 & 0.544
& 57.4 & 0.372 & 0.363 & 0.532 \\

\midrule

ConfClass & -
& 76.1 & 0.196 & 0.154 & 0.623
& 63.9 & 0.293 & 0.226 & 0.565 
& 60.4 & 0.271 & 0.222 & 0.575\\

RLCR & verbal
& 75.0 & 0.143 & 0.105 & 0.736
& 60.5 & \textbf{0.132} & \underline{0.142} & 0.807
& 56.5 & {0.139} & {0.128} & 0.753 \\

CCGSPG & logtis
& 71.1 & 0.118 & 0.210 & 0.825
& 62.3 &  0.187 & 0.305 & 0.772
& 57.6 & 0.230 & 0.283 & 0.815 \\

\midrule

\ours DCPO & verbal
& 75.6 & \underline{0.092} & \underline{0.089} &  \underline{0.849}
& \textbf{65.0} & 0.191 & 0.194 & \underline{0.832}
& \textbf{60.8} &  \textbf{0.128} &  \underline{0.126} & \textbf{0.881}\\

\ours DCPO-G & verbal
& \textbf{76.4} & \textbf{0.073} & \textbf{0.075} & 0.847
& 63.4 & 0.186 & 0.237 & {0.822}
& \underline{60.5} & {0.209} & 0.229 &  \underline{0.846}\\

\ours DCPO-I & verbal
& 71.9 & 0.134 & 0.116 & 0.828
& \underline{64.7} & \underline{0.140} & \textbf{0.133} & \textbf{0.847}
& 58.7 & \underline{0.138} & \textbf{0.122} & 0.819 \\

\bottomrule
\end{tabular}
}
\label{tab:calibration_results}
\end{table*}

\subsection{Masked Gradient Optimization}

To prevent interference between correctness optimization and confidence calibration, we adopt a masked gradient optimization strategy that applies distinct advantage signals to different token blocks.

For each response $o_i$, we construct a token-level mask that separates reasoning tokens $o_r$ from confidence tokens $o_c$. During policy optimization, advantages signals are applied exclusively to their corresponding token subsets:
\begin{equation}
\small
\label{eq:loss_fun}
\frac{1}{G}\sum_{i=1}^{G}
\frac{1}{|o_i|}
\left[
\sum_{y_j \in o_r}
\hat{\rho}_{i,j} \, A_{r,i}
+
\sum_{y_j \in o_c}
\hat{\rho}_{i,j} \, A_{c,i}
\right],
\end{equation}
where $\hat{\rho}_{i,j}$ denotes the clipped importance sampling ratio for token $y_j$.

This block-wise masked optimization ensures that gradients derived from correctness supervision do not affect confidence estimation, and vice versa, enabling effective decoupling of reasoning accuracy and calibration objectives under a shared policy.

\begin{theorem}[Statistical Optimality of Decoupled Calibration]
\label{thm:decoupled-opt}
Under a strictly proper scoring rule, the optimal confidence predictor satisfies
\begin{equation}\small
\mathbb{E}[c\mid q]
=
\mathbb{E}_{y\sim\pi_\theta(\cdot\mid q)}[R(y)].
\end{equation}
\end{theorem}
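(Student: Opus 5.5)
The idea is to treat the confidence block $o_c$ as a separate decision made after the reasoning block $o_r$ has been produced, and to use the defining property of strictly proper scoring rules.

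First, we make the decoupling precise. Under the masked objective \eqref{eq:loss_fun}, the confidence advantage $A_{c,i}$ only reaches the tokens of $o_c$. The reasoning advantage $A_{r,i}$ only reaches $o_r$. So for a fixed reasoning policy $\pi_\theta(o_r\mid q)$, the calibration part of the objective is a functional of the conditional confidence distribution alone. Writing $c=\mathrm{confidence}(o_c)$, the expected calibration loss is
\begin{equation*}\small
\mathcal{L}(c\text{-policy}) = \mathbb{E}_{o_r\sim\pi_\theta(\cdot\mid q)}\,\mathbb{E}_{c\mid q,o_r}\big[\ell(c, R(o_r))\big].
\end{equation*}
Because the confidence policy is optimized separately, we may minimize $\mathcal{L}$ pointwise in the conditioning information available to the confidence head.

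The core step is to minimize pointwise. Given the information $\mathcal{I}$ on which $c$ may depend (at least $q$, possibly also $o_r$), the inner objective is $\mathbb{E}[\ell(c,R)\mid \mathcal{I}]$. Since $R\in\{0,1\}$, this is the expected score of the forecast $c$ against a Bernoulli outcome with parameter $p(\mathcal{I})=\mathbb{E}[R\mid\mathcal{I}]$. By strict propriety (in the sense of \citet{gneiting2007strictly}), the unique minimizer is $c^\ast=p(\mathcal{I})$. In the Brier case this is the elementary fact that $c^2-2cp$ is minimized at $c=p$. Any randomized confidence output is weakly worse than the deterministic $p(\mathcal{I})$, with strict inequality unless it puts all its mass on $p(\mathcal{I})$, so the optimal confidence policy concentrates there.

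Next, we take the expectation conditional on $q$. By the tower property,
\begin{equation*}\small
\mathbb{E}[c^\ast\mid q]=\mathbb{E}\big[\mathbb{E}[R\mid\mathcal{I}]\mid q\big]=\mathbb{E}_{y\sim\pi_\theta(\cdot\mid q)}[R(y)].
\end{equation*}
This is exactly the claim. It holds whether $c$ sees only $q$ or also the trajectory $o_r$. In the latter case it is the instance-level refinement, and in the former the identity holds with $c^\ast$ constant. Finally, we connect this to the training target $R_{IG}$. By Proposition~\ref{prop:group-unbiased}, $\mathbb{E}[R_{IG}\mid q]=\lambda\,\mathbb{E}[\tilde R_G]+(1-\lambda)\mathbb{E}[R]=\mathbb{E}[R\mid q]$. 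So the hybrid target has the same conditional mean, and using it does not move the optimum in expectation.

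The main obstacle is the loss actually used in DCPO, namely the absolute loss $|c-R_{IG}|$, which is \emph{not} strictly proper: its minimizer is a conditional median, not a mean. The plan is to state the theorem under its stated hypothesis, a strictly proper $\ell$ such as Brier or log score, and to remark that for $|\cdot|$ the conclusion needs the group-level term. With $\lambda$ close to $1$ the target $R_{IG}$ concentrates around $\mathbb{E}[R\mid q]$ (variance $O(1/G)$ by Proposition~\ref{prop:group-unbiased}). The median and mean then coincide up to $O(1/\sqrt{G})$, which recovers the statement approximately.
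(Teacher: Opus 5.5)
Your proof is correct. It uses the same two ingredients as the paper: strict propriety makes the Bernoulli mean the unique minimizer, and decoupling lets the confidence policy be optimized with $\pi_\theta(o_r\mid q)$ held fixed. It is more general exactly where the paper's argument is loose.

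The paper fixes $q$, treats $R(y)$ as $\mathrm{Bern}(p^\ast)$ with $p^\ast=\mathbb{E}_{y\sim\pi_\theta}[R(y)]$, and minimizes $\mathbb{E}_{R\sim\mathrm{Bern}(p^\ast)}[\ell(c,R)]$ over a single forecast $c$. This tacitly makes the confidence a function of $q$ alone, even though the paper's own formulation samples $c\sim\pi_\theta(\cdot\mid q,y)$ and in DCPO the block $o_c$ is generated after $o_r$. Once $c$ can see the trajectory, the pointwise optimum is $\mathbb{E}[R\mid\mathcal{I}]$, not $p^\ast$. The stated identity then needs your tower-property step $\mathbb{E}[\mathbb{E}[R\mid\mathcal{I}]\mid q]=\mathbb{E}[R\mid q]$ for $\mathcal{I}\supseteq\sigma(q)$. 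Your point that randomized outputs are strictly suboptimal also makes precise what the paper leaves as ``$\mathbb{E}[c\mid q]$ converges to $p^\ast$''. The paper's version is shorter and fits the group-level picture of confidence as a per-prompt estimate. Yours covers the trace-conditioned instance-level and hybrid targets the method actually uses.

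Your closing caveat is also sound. The paper justifies the minimizer by saying $\ell$ ``reaches its minimum at $c=r$''. The absolute loss $|c-r|$ satisfies that too, yet $\mathbb{E}_{R\sim\mathrm{Bern}(p)}|c-R|=p+(1-2p)c$ is linear in $c$, so the theorem does not literally cover the reward $R_c$. One refinement: unless the confidence head can determine $R(o_r)$, the spread of $R_{IG}$ also has a term of order $1-\lambda$. The mean--median gap is therefore $O(1-\lambda)+O(1/\sqrt{G})$ rather than $O(1/\sqrt{G})$, which is not small at the paper's $\lambda=0.5$.
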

\vspace{-1em}

\begin{proof}
    Consider $R(y)$ as i.i.d Bernoulli distribution. For a proper calibration estimation function $\ell(c, r)$, the expected loss should be minimized at $c = r$, and the expected loss
    \begin{equation}
        \arg\min_{c}\mathbb{E}_{r\sim Bern(p)}[l(c, r)] = p.
    \end{equation}

    Since $\pi_{\theta}(c|q, y)$ is optimized independly of reasoning policy $\pi_{\theta}(y|q)$, the expectation loss $\mathbb{E}[c|q]$ converges without affect $\pi_{\theta}(y|q)$ explicitly.

    Thus, 
    \begin{equation}
        \mathbb{E}[c|q] = \mathbb{E}_{y\sim\pi_\theta(\cdot\mid q)}[R(y)].  
    \end{equation}
    
    The detailed proof is provided in Appendix ~\ref{apdx1.3}.
\end{proof}

Theorem~\ref{thm:decoupled-opt} establishes that decoupled confidence estimation yields statistically consistent uncertainty estimates without interfering with policy optimization. 

\section{Experiments}

We conduct extensive experiments on mathematical reasoning and code generation tasks, demonstrating that our proposed method improves confidence calibration while preserving reasoning accuracy. 

\subsection{Experimental Settings}

\begin{table*}[t]
\centering
\caption{Accuracy and calibration evaluation over 3 code generation benchmarks.}
\resizebox{0.95\textwidth}{!}{
\begin{tabular}{l cccc cccc cccc}
\toprule
\multirow{2}{*}{Method}
& \multicolumn{4}{c}{LCB-v5} 
& \multicolumn{4}{c}{LCB-v6} 
& \multicolumn{4}{c}{HE+} \\
\cmidrule(lr){2-5} \cmidrule(lr){6-9} \cmidrule(lr){10-13}
& Acc $\uparrow$ & ECE $\downarrow$ & PCE $\downarrow$ & AUROC $\uparrow$
& Acc $\uparrow$ & ECE $\downarrow$ & PCE $\downarrow$ & AUROC $\uparrow$
& Acc $\uparrow$ & ECE $\downarrow$ & PCE $\downarrow$ & AUROC $\uparrow$\\
\midrule
Base & 0.228 & 0.482 & 0.478 & 0.634 & 0.196 & 0.458 & 0.397 & 0.659 & 0.902 & 0.077 & 0.068 & 0.802 \\
GRPO & \textbf{0.320} & 0.393 & 0.522 & 0.693 & 0.278 & 0.386 & 0.411 & 0.788 &  0.944 & 0.083 & 0.083 & 0.821 \\
ConfClass & 0.320 & 0.462 & 0.401 & 0.685 & 0.278 & 0.397 & 0.373 & 0.772 & 0.944 & 0.102 & 0.113 & 0.839 \\
RLCR & 0.267 & 0.231 & 0.227 & 0.785 & 0.248 & 0.290 & 0.285 & 0.829 & 0.931 & 0.036 & 0.035 & 0.907 \\
\midrule
\ours DCPO & 0.314 & \textbf{0.223} & \textbf{0.205} & \textbf{0.796} & \textbf{0.283} & \textbf{0.267} & \textbf{0.250} & \textbf{0.845} & \textbf{0.947} & \textbf{0.034} & \textbf{0.032} & \textbf{0.914}\\
\bottomrule
\end{tabular}
}
\label{tab:code_results}
\end{table*}

\textbf{Datasets.}
For math reasoning tasks, all RL methods are trained on DeepScalar dataset and evaluated on a suite of mathematical reasoning benchmarks, including MATH-500, AIME 2024/2025 and AMC 2023/2024, which enable joint assessment of reasoning accuracy and confidence calibration.

For code generation tasks, all methods are trained on the PrimeIntellect-verifier dataset~\cite{brown_verifiers_2025} and evaluated on widely used benchmarks, including LiveCodeBench v5/v6 and HumanEval+. 

\textbf{Baselines.}
We compare DCPO with representative methods spanning standard RL, post-hoc calibration, and calibration-aware RL:
1) \textbf{GRPO}~\cite{grpo}: standard GRPO objective, serving as the primary baseline for reasoning performance;
2) \textbf{ConfClass}: a post-hoc MLP confidence predictor trained on token-level generation statistics;
3) \textbf{RLCR}~\cite{damani2025beyond}: which incorporates a Brier Score calibration loss into the reward;
4) \textbf{CCGSPG}~\cite{liu2025c}: which modifies the GRPO objective according to token-based confidence.
All methods are implemented on widely used LLM Qwen3-8B (non-thinking) to ensure fair comparison.

\textbf{Evaluation Metrics.}
We evaluate models from 3 complementary perspectives:
we use accuracy for reasoning performance, ECE and AUROC for calibration evaluation, and PCE to assess over-confidence..

\textbf{Hyperparameters}
For math tasks, all models are trained with a global batch size of 256, group rollout size 8, for 3 epochs (approximately 120 steps).
For code generation tasks, all models are trained with a global batch size of 128, group rollout size 8, for 3 epoches (approximately 90 steps).
Evaluation is conducted with temperature 0.7, top-$p$ 0.8, and top-$k$ 20, following the official Qwen3-8B evaluation protocol~\citep{yang2025qwen3}. Detailed training and evaluation parameters are in Appendix~\ref{apdx2}.

\subsection{Overall Performance}

Table~\ref{tab:calibration_results} and Table~\ref{tab:code_results} reports reasoning accuracy together with calibration metrics on math reasoning tasks and code geenration tasks. Together, these results demonstrate that decoupling correctness and confidence optimization is critical to achieve well-calibrated confidence without degrading reasoning ability across both domains.

\begin{figure}[tp]
   \setlength{\abovecaptionskip}{0.1 in}
    \setlength{\belowcaptionskip}{-0.25 in}
  \begin{center}
    \includegraphics[width=\columnwidth]{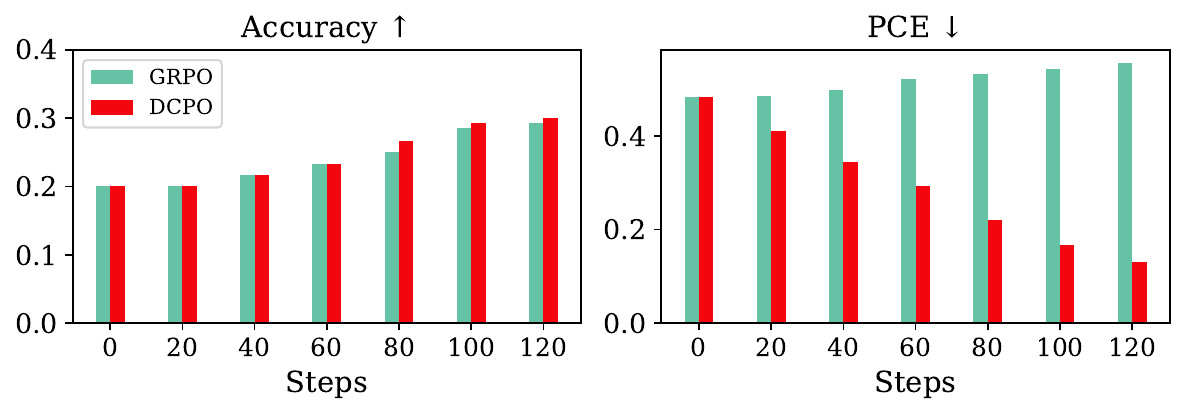}
    \caption{
      Accuracy and PCE on AIME25 dataset at different training steps for GRPO and DCPO. The figures illustrate that during the training process, our method can significantly reduce over-confidence while preserving accuracy.
    }
    \label{accuracy_pce}
  \end{center}
\end{figure}

\textbf{Coupled optimization methods suffer from an accuracy-calibration tradeoff.}
As shown in Table~\ref{tab:calibration_results} and Table~\ref{tab:code_results}, RLCR reduces PCE on AIME24 from 0.510 (GRPO) to 0.214, but at the cost of accuracy drop from 40.0\% to 32.8\%. Similar trend is observed for CCGSPG. On code generation benchmarks, RLCR reduces ECE on LiveCodeBench v5 from 0.393 to 0.231, but causes accuracy drop from 0.320 to 0.267. These results indicates that directly incorporating calibration losses into the RL objective interferes with correctness-driven policy optimization.

\textbf{Post-hoc calibration is insufficient for correcting mis-calibration.}
As shown in Table~\ref{tab:calibration_results}, adding classifier on top of GRPO yields only marginal ECE improvements (e.g. from 0.370 to 0.363 on AIME24), while achieving a low AUROC of 0.642, substantially below DCPO’s 0.914, which indicates that RLVR significantly distorts the model’s internal representation, making post-hoc methods ineffective at substantially reducing mis-calibration.

\textbf{DCPO achieves a favorable accuracy-calibration tradeoff and reduces over-confidence.}
DCPO jointly improves calibration and preserves reasoning performance across both mathematical reasoning and code generation tasks.
For example, on AIME24, DCPO achieves 41.6\% accuracy, which is comparable to GRPO while reducing PCE from 0.505 to 0.212. On code generation benchmarks, DCPO attains the highest average accuracy (0.515), slightly outperforming GRPO (0.514), while substantially reducing average ECE from 0.287 to 0.175. 
Figure~\ref{accuracy_pce} further illustrates the training dynamics: GRPO exhibits increasing over-confidence during RL, with PCE rising from 0.483 to 0.556 on AIME25, whereas DCPO consistently suppresses over-confidence while maintaining stable accuracy.
These results demonstrate that DCPO effectively mitigates RL-induced over-confidence and achieves a substantially better accuracy-calibration tradeoff.

\begin{table}[t]
\centering
\caption{Ablation results averaged over 5 mathematical benchmarks.}
\resizebox{\columnwidth}{!}{
\begin{tabular}{lccc}
\toprule
Variant & Acc $\uparrow$ & ECE $\downarrow$ &PCE $\downarrow$ \\
\midrule
\ours DCPO & \textbf{60.8} & \textbf{0.128} & 0.126\\
\hspace{0.8em} \textit{w/o} Instance-level Labels & 60.5 & 0.209 & 0.229 \\
\hspace{0.8em} \textit{w/o} Group-level Labels & 58.7 & 0.138 & \textbf{0.122} \\
\hspace{0.8em} \textit{w/o} Decoupled Optimzation & 57.3 & 0.258 & 0.247 \\
\hspace{0.8em} \textit{w/o} On-policy Training & 56.3 & 0.223 & 0.210 \\
\bottomrule
\end{tabular}
}
\label{tab:ablation}
\end{table}

\begin{figure}[tp]
   \setlength{\abovecaptionskip}{0.1 in}
    \setlength{\belowcaptionskip}{-0.30 in}
  \begin{center}
    \centerline{\includegraphics[width=\columnwidth]{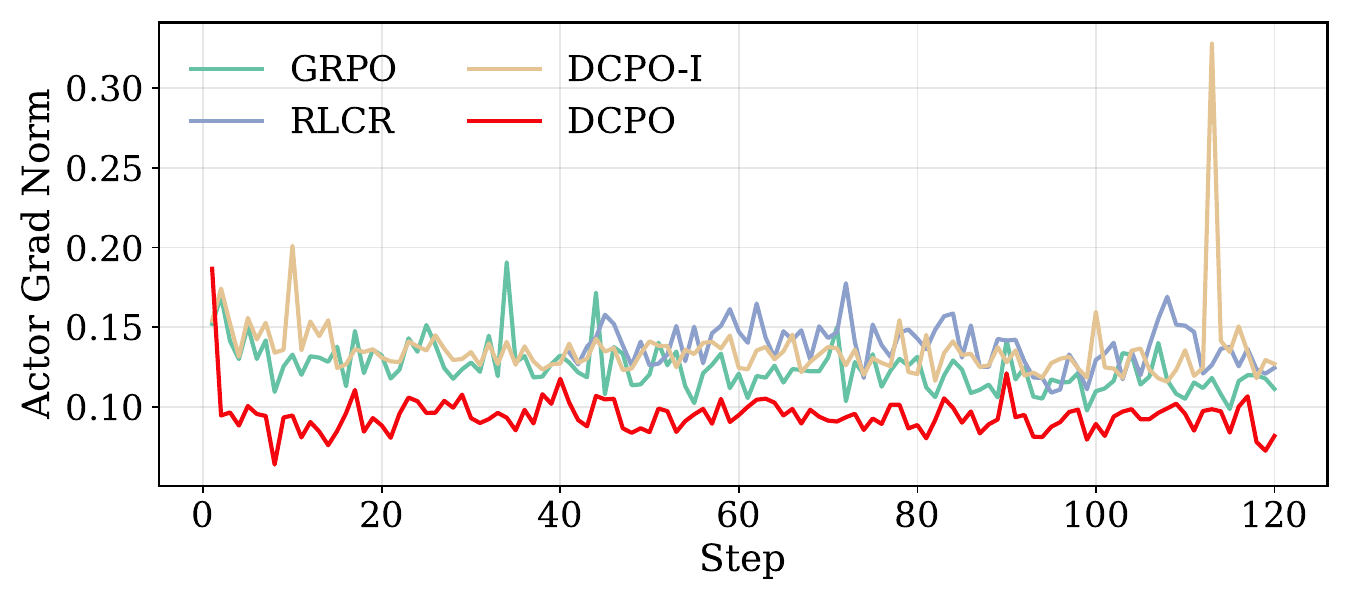}}
    \caption{
      The gradient-norm dynamics across different training methods, which demonstrates that DCPO achieves more stable optimization dynamics than other methods.}
    \label{grad-norm}
  \end{center}
\end{figure}

\subsection{Ablation Studies}

We conduct ablation studies to quantify the contribution of each key component in DCPO.
Table~\ref{tab:ablation} reports the averaged accuracy and calibration performance across five mathematical benchmarks.

\textbf{Effectiveness of decoupled optimization.}
Removing decoupled optimization causes degradation across all variants. In particular, ECE increases from 0.128 to 0.258, while accuracy drops from 60.8\% to 57.3\%, which indicates strong gradient interference in coupled optimization. 
In contrast, DCPO results in a substantially better accuracy-calibration tradeoff.
These results confirm that decoupling is essential for stabilizing training and preventing calibration objectives from harming reasoning performance.

\textbf{Effectiveness of hybrid group-instance supervision.}
As shown in Table~\ref{tab:ablation}, removing instance-level labels leads to a notable increase in ECE from 0.128 to 0.209, while removing group-level labels causes a clear accuracy drop from 60.8\% to 58.7\%, indicating that group-level accuracy provides a low-variance supervision signal that stabilizes optimization, whereas instance-level correctness enables finer-grained confidence differentiation.
By jointly leveraging both signals, DCPO achieves the lowest ECE (0.128) while maintaining the highest accuracy (60.8\%), demonstrating the effectiveness of hybrid calibration supervision.

\textbf{Importance of on-policy calibration.}
As shown in Table~\ref{tab:ablation}, off-policy calibration reduces accuracy from 60.8\% to 56.3\% and increases ECE to 0.223, which demonstrates that off-policy training can interfere with previously learned reasoning behaviors.
In contrast, DCPO preserves reasoning performance while achieving better calibration.

Overall, these results demonstrate that \emph{decoupled optimization}, \emph{hybrid group-instance supervision}, and \emph{on-policy calibration} are all critical components of DCPO, and jointly contribute to its superior accuracy-calibration tradeoff.

\subsection{Detailed Analysis}

\begin{figure}[tp]
\setlength{\abovecaptionskip}{0.1 in}
\setlength{\belowcaptionskip}{-0.30 in}
  \begin{center}
    \centerline{\includegraphics[width=0.9\columnwidth]{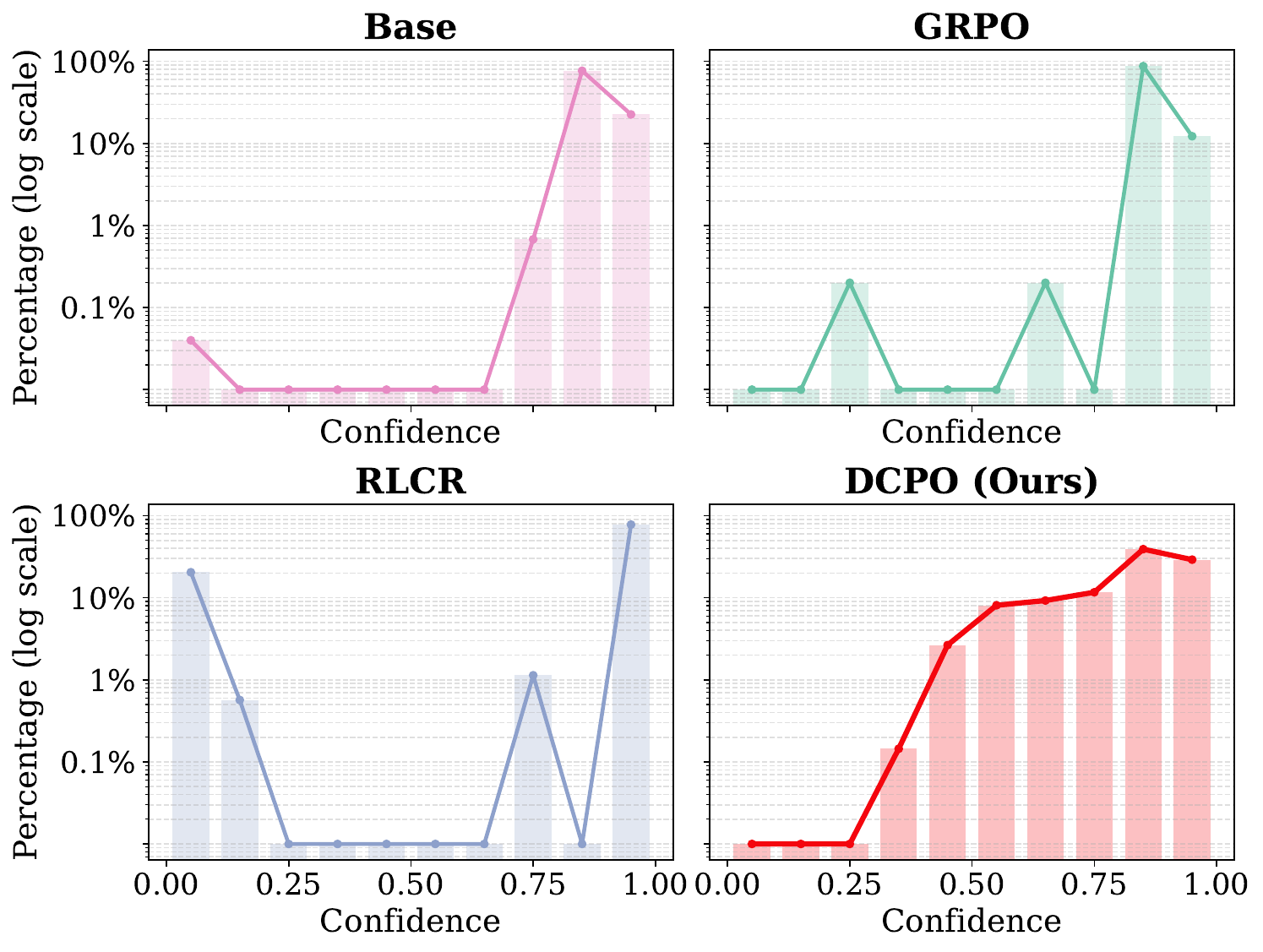}}
    \caption{
      Distribution of verbalized confidence predictions across 5 mathematical benchmarks. The y-axis is log-scaled to better visualize the highly concentrated confidence distributions.
    }
    \label{distribution}
  \end{center}
\end{figure}

\textbf{DCPO enables stable optimization dynamics.}
To analyze how different calibration strategies affect optimization behavior, we track the $\ell_2$ norm of policy gradients in training.
Figure~\ref{grad-norm} compares the gradient norm trajectories of GRPO, RLCR, DCPO-I, and DCPO. In particular, instance-level calibration methods, such as RLCR and DCPO-I, exhibit pronounced gradient norm fluctuations across training steps, indicating high-variance and unstable optimization. In contrast, DCPO maintains a smoother and more stable gradient norm profile throughout training, which demonstrates that DCPO achieves more stable optimization dynamics. 

\textbf{DCPO introduces balanced confidence distribution. }
To analyze how different training strategies shape verbalized confidence behavior, we visualize the predicted confidence distribution in Figure~\ref{distribution}.
We observe that GRPO-trained model exhibit heavily skewed confidence distributions, reflecting severe over-confidence. RLCR collapses confidence estimates toward extreme values, indicating poor confidence granularity. In comparison, DCPO produces a balanced and continuous confidence distribution, which demonstrates that decoupled calibration with hybrid supervision is critical for learning expressive and reliable confidence.

\section{Conclusion}

In this paper, we analyze in theory that RLVR inherently induces over-confidence. Moreover, we identified a gradient conflict between accuracy and calibration optimization. Based on these insights, we propose DCPO, a decoupled confidence-aware policy optimization framework. Experiments demonstrate that DCPO significantly improves calibration while preserving reasoning performance. Our study highlights the importance of decoupled optimization in model calibration.

\section*{Acknowledgements}

We sincerely thank the reviewers for their insightful comments and valuable suggestions. This work was supported by the Natural Science Foundation of China (No. 62536008, 62476265, 62306303). The authors would like to thank Huawei Ascend Cloud Ecological Development Project for the support of Ascend 910 processors.

\section*{Impact Statement}

This paper presents work whose goal is to advance the field of Machine Learning. There are many potential societal consequences of our work, none of which we feel must be specifically highlighted here.

\bibliography{example_paper}
\bibliographystyle{icml2026}

\newpage
\appendix
\onecolumn

\section{Detailed Proofs for Theoretical Analysis}
\label{atheory}

\subsection{Proof of Proposition 1 (Mode Collapse under Trajectory-Level RL)}
\label{apdx1.1}

\begin{proposition}
    Let $\mathcal{Y}$ denote the finite or countable set of all possible response trajectories for a fixed input $x$, and let
    $$\Delta (\mathcal{Y}):=\left\{ p: \mathcal{Y} \to [0, 1] | \sum_{y\in\mathcal{Y}} p(y) = 1\right\}$$
    denote the probability simplex over trajectories. let $\mathcal{Y}^{+} = \{y \in \mathcal{Y} | R(y) = 1\} $ denote the correct trajectories. 

    In the absence of regularizon terms, the optimal solution to RL math problem objective
    $$\max_{p \in \Delta (\mathcal{Y})} \sum_{y\in \mathcal{Y}^{+}}p(y)$$
    assigns probability mass 1 to a single trajectory $y^{*} \in \mathcal{Y}^{+}$.
\end{proposition}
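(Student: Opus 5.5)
The plan is to work directly on the simplex $\Delta(\mathcal{Y})$ rather than through the parameterization $\theta$, as in the sketch following Proposition~\ref{prop:mode-collapse}: the objective $F(p):=\sum_{y\in\mathcal{Y}^{+}}p(y)=p(\mathcal{Y}^{+})$ is linear in $p$. I would first compute the optimal value and then describe the full set of maximizers. I will assume $\mathcal{Y}^{+}\neq\emptyset$; otherwise $F\equiv 0$ and the statement is vacuous.

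\emph{Optimal value and optimal set.} For every $p\in\Delta(\mathcal{Y})$ we have $0\le F(p)=p(\mathcal{Y}^{+})\le \sum_{y\in\mathcal{Y}}p(y)=1$. Picking any $y^{*}\in\mathcal{Y}^{+}$, the point mass $\delta_{y^{*}}$ gives $F(\delta_{y^{*}})=1$. Hence the supremum equals $1$ and is attained. Moreover, $F(p)=1$ holds if and only if $p(\mathcal{Y}\setminus\mathcal{Y}^{+})=0$. So the optimal set is exactly the face
\[
\mathcal{P}^{*}=\{p\in\Delta(\mathcal{Y}) : \operatorname{supp}(p)\subseteq\mathcal{Y}^{+}\}.
\]
This argument uses only nonnegativity and normalization. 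It therefore works equally for finite and countable $\mathcal{Y}$, and avoids appealing to the fundamental theorem of linear programming, which needs care in the countable (infinite-dimensional) case.

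\emph{Identifying the point-mass solutions.} The face $\mathcal{P}^{*}$ is itself a simplex, namely $\Delta(\mathcal{Y}^{+})$. Its extreme points are exactly the $\delta_{y^{*}}$ with $y^{*}\in\mathcal{Y}^{+}$. To see this, suppose $p$ has two support points $y_1,y_2$. Shifting a small mass $\varepsilon$ between them in either direction gives two distinct elements of $\mathcal{P}^{*}$ whose average is $p$, so $p$ is not extreme. Conversely, a point mass cannot be written as a nontrivial convex combination of elements of $\mathcal{P}^{*}$. Consequently, a maximizer concentrating all mass on a single correct trajectory always exists. These are precisely the vertex (extreme-point) optima that any LP or simplex-type solution selects.

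\emph{Main obstacle.} The literal wording ``any optimal solution assigns mass $1$ to a single trajectory'' is false whenever $|\mathcal{Y}^{+}|\ge 2$. For example, the uniform distribution on two correct trajectories also attains $F=1$. I would therefore prove the corrected version: the optimal set is $\mathcal{P}^{*}$, and its extreme points, i.e.\ the point-mass optima, are the $\delta_{y^{*}}$ with $y^{*}\in\mathcal{Y}^{+}$. The literal uniqueness claim holds only when $|\mathcal{Y}^{+}|=1$, where $\mathcal{P}^{*}=\{\delta_{y^{*}}\}$. The intended ``collapse'' message survives in weaker form. Every optimum puts zero mass on $\mathcal{Y}\setminus\mathcal{Y}^{+}$, so optimization offers no incentive to retain uncertainty over incorrect outputs. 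Nothing in $F$ rewards spreading mass, so entropy is not preserved by the objective, and a vertex optimum has zero entropy. A genuine uniqueness-of-collapse claim would require an extra assumption, such as a tie-breaking secondary objective or gradient dynamics under softmax with unequal per-trajectory gradients. I would flag this explicitly rather than assert it.
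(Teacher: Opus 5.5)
Your proof is correct, and your caveat about the literal wording is right. The paper's own appendix proof in fact establishes only your qualified version. It concedes that every distribution supported on $\mathcal{Y}^{+}$ attains the optimum, and it concludes only that ``any optimal solution attained at an extreme point'' is a point mass.

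Where you differ is in how the optimal set is identified. The paper sets up a Lagrangian with a multiplier $\lambda$ for normalization and multipliers $\mu_y\ge 0$ for nonnegativity. It then argues by contradiction from stationarity and complementary slackness: if $p^{*}(y)>0$ for some $y\notin\mathcal{Y}^{+}$, then $\mu_y=0$ forces $\lambda=0$, which violates stationarity at any $y'\in\mathcal{Y}^{+}$. You replace this with the single bound $p(\mathcal{Y}^{+})\le 1$, with equality iff $p(\mathcal{Y}\setminus\mathcal{Y}^{+})=0$. That one bound gives the optimal value, its attainment, and the full optimal set at once. Your route needs no appeal to the necessity of KKT conditions, nor, for the main-text sketch, to the fundamental theorem of linear programming. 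For countably infinite $\mathcal{Y}$ either appeal would need a justification the paper does not supply. What the KKT template offers in exchange is that it transfers directly to nonlinear objectives: adding an entropy term (for finite $\mathcal{Y}$) yields a Gibbs-form optimum, which is the natural contrast to the ``no regularization'' hypothesis.

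One small correction: when $\mathcal{Y}^{+}=\emptyset$ the claim is false rather than vacuous, since no distribution can put mass $1$ on an element of the empty set. So $\mathcal{Y}^{+}\neq\emptyset$ is a genuine hypothesis, and the paper's contradiction step also uses it tacitly.
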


\begin{proof}
    Consider the constrained optimization problem
    \[
    \max_{p(y) \ge 0}
    \quad
    \sum_{y \in \mathcal{Y}^{+}} p(y)
    \quad
    \text{s.t.}
    \quad
    \sum_{y \in \mathcal{Y}} p(y) = 1.
    \]

    Introduce the Lagrangian
    \[
    \mathcal{L}(p, \lambda, \{\mu_y\})
    =
    \sum_{y \in \mathcal{Y}^{+}} p(y)
    + \lambda \Big(1 - \sum_{y \in \mathcal{Y}} p(y)\Big)
    + \sum_{y \in \mathcal{Y}} \mu_y p(y),
    \]
    where $\lambda \in \mathbb{R}$ is the multiplier associated with the equality constraint and $\mu_y \ge 0$ are the multipliers associated with the non-negativity constraints.

    The Karush-Kuhn-Tucker (KKT) conditions require that, at an optimal solution $p^{*}$, the following hold:
    \begin{align}
        \frac{\partial \mathcal{L}}{\partial p(y)}
        &=
        \mathbb{I}[y \in \mathcal{Y}^{+}] - \lambda + \mu_y = 0,
        \quad \forall y \in \mathcal{Y}, \label{eq:kkt_stationarity} \\
        \mu_y &\ge 0, \quad p^{*}(y) \ge 0, \label{eq:kkt_feasible} \\
        \mu_y \, p^{*}(y) &= 0. \label{eq:kkt_complementary}
    \end{align}

    We first show that $p^{*}$ must satisfy $\mathrm{supp}(p^{*}) \subseteq \mathcal{Y}^{+}$. Suppose by contradiction that there exists $y \notin \mathcal{Y}^{+}$ with $p^{*}(y) > 0$. Then by complementary slackness, $\mu_y = 0$, and from \eqref{eq:kkt_stationarity} we obtain $-\lambda = 0$, implying $\lambda = 0$. However, for any $y' \in \mathcal{Y}^{+}$, \eqref{eq:kkt_stationarity} then yields $1 - \lambda + \mu_{y'} = 1 + \mu_{y'} > 0$, which contradicts stationarity. Hence, $p^{*}(y) = 0$ for all $y \notin \mathcal{Y}^{+}$.

    Next, consider $y \in \mathcal{Y}^{+}$. If $p^{*}(y) > 0$, then by complementary slackness $\mu_y = 0$, and \eqref{eq:kkt_stationarity} implies $\lambda = 1$. Therefore, for all $y \in \mathcal{Y}^{+}$, the KKT conditions are satisfied whenever $p^{*}(y) > 0$.

    This shows that any probability distribution supported entirely on $\mathcal{Y}^{+}$ satisfies the KKT conditions and hence achieves the same optimal objective value. However, such solutions form a face of the simplex $\Delta(\mathcal{Y})$ whose extreme points are precisely the Dirac measures $\delta_y$ for $y \in \mathcal{Y}^{+}$. Since any distribution with support size greater than one can be written as a non-trivial convex combination of other feasible points, it is not an extreme point of the feasible set.

    Consequently, any optimal solution attained at an extreme point assigns probability mass $1$ to a single trajectory $y^{*} \in \mathcal{Y}^{+}$.
\end{proof}

This proof result holds irrespective of the size or structure of $\mathcal{Y}^{+}$, and can easily extend to expective-reward objectives with bounded linearity rewards.

We further note that such low-entropy solutions are stable under small perturbations of the input.
Since the policy logits are continuous functions of $x$, extreme logit margins at training inputs induce neighborhoods in the input space where
\begin{equation}\small
\label{eq:ood-conf}
\max_y \pi_\theta(y\mid x') \approx 1,
\end{equation}
As correctness is not a continuous function of the input, this yields over-confident but incorrect predictions under distribution shift.

\subsection{Proof of Proposition 2 (Gradient Conflict between Correctness and Calibration)}
\label{apdx1.2}

\begin{proposition}[Natural-Gradient Conflict under Over-Confidence]
Fix an input $x$ and a policy $\pi_\theta(y\mid x)$ defined on a finite or
countable action space $\mathcal Y$.
Define
\[
J_{\mathrm{acc}}(\theta)
:= \mathbb{E}_{y\sim\pi_\theta(\cdot\mid x)}[R(y)],
\qquad
Conf_\theta(x)
:= \mathbb{E}_{y\sim\pi_\theta(\cdot\mid x)}[\phi(y)],
\]
where $R(y),\phi(y)\in[0,1]$.

Let
\[
J_{\mathrm{cal}}(\theta)
:=
-\,l\!\left(Conf_\theta(x),\, J_{\mathrm{acc}}(\theta)\right),
\]
where $l(c,t)$ is continuously differentiable, convex in $c$, and satisfies
\[
\frac{\partial l}{\partial c}(c,t) > 0
\quad \text{whenever } c>t.
\]

Let
\[
g(y) := \nabla_\theta \log \pi_\theta(y\mid x),
\qquad
F := \mathbb{E}_{y\sim\pi_\theta}[g(y)g(y)^\top]
\]
denote the score function and the Fisher information matrix.

Assume that:
\begin{enumerate}
    \item \textbf{(Over-confidence)}
    \[
    Conf_\theta(x) > J_{\mathrm{acc}}(\theta).
    \]
    \item \textbf{(Positive reward-confidence correlation)}
    \[
    \operatorname{Cov}_{\pi_\theta}(R(y),\phi(y)) > 0.
    \]
\end{enumerate}

Then, letting
\[
\langle a,b\rangle_{F^{-1}} := a^\top F^{-1} b
\]
denote the Fisher-metric inner product, we have
\[
\bigl\langle
\nabla_\theta J_{\mathrm{acc}}(\theta),\,
\nabla_\theta J_{\mathrm{cal}}(\theta)
\bigr\rangle_{F^{-1}}
< 0,
\]
whenever $\nabla_\theta Conf_\theta(x)\neq 0$.
\end{proposition}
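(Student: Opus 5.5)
The plan is to compute both gradients with the score-function (log-derivative) identity, express them as ``covariance vectors'' against the score $g(y)$, and then use the Fisher metric to turn their inner product into a covariance between the scalar functions $R$ and $\phi$. Following the treatment of \eqref{cal_gradient} in the main text, I will regard the calibration target $t=J_{\mathrm{acc}}(\theta)$ as a supervision signal held fixed (stop-gradient) when differentiating $J_{\mathrm{cal}}$. This is how the target enters in practice, since the reward $R_{IG}$ is computed from sampled rollouts and not differentiated through. This choice is substantive, not cosmetic: see the last paragraph.

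\textbf{Step 1 (gradients).} Since $\mathbb{E}_{\pi_\theta}[g]=0$, we have
\[
\nabla_\theta J_{\mathrm{acc}}=\mathbb{E}_{\pi_\theta}\big[(R-\mathbb{E}R)\,g\big].
\]
The same identity gives $\nabla_\theta \mathrm{Conf}_\theta=\mathbb{E}_{\pi_\theta}[(\phi-\mathbb{E}\phi)\,g]$. By the chain rule,
\[
\nabla_\theta J_{\mathrm{cal}}=-\tfrac{\partial l}{\partial c}\big(\mathrm{Conf}_\theta(x),t\big)\,\nabla_\theta\mathrm{Conf}_\theta,
\]
and the over-confidence assumption together with the hypothesis on $l$ gives $\partial l/\partial c>0$.

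\textbf{Step 2 (Fisher inner product).} Write $a:=R-\mathbb{E}R$ and $b:=\phi-\mathbb{E}\phi$, and let $G$ be the linear map $h\mapsto \mathbb{E}[h\,g]$, so that $F=\mathbb{E}[gg^\top]$. The key identity to establish is
\[
\langle \mathbb{E}[a g],\mathbb{E}[b g]\rangle_{F^{-1}}=\mathbb{E}[ab]=\operatorname{Cov}_{\pi_\theta}(R,\phi).
\]
This is exactly true when the centered functions $a,b$ lie in the span of the score components $\{v^\top g\}$. The canonical case is the tabular softmax parametrization over $\mathcal Y$, where every centered function of $y$ is of this form. In that case write $a=\alpha^\top g$ and $b=\beta^\top g$. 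Then $\mathbb{E}[ag]=F\alpha$ and $\mathbb{E}[bg]=F\beta$, so the inner product equals $\alpha^\top F\beta=\mathbb{E}[ab]$.

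If $F$ is singular, as it is for softmax because logits are defined up to a constant, I will read $F^{-1}$ as the Moore--Penrose pseudo-inverse. The identity survives because $F\alpha$ and $F\beta$ lie in the range of $F$. For a general parametrization the same computation yields the covariance of the $L^2(\pi_\theta)$-projections of $a$ and $b$ onto the score span, so I will state the result under this span (expressivity) assumption.

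\textbf{Step 3 (conclusion).} Combining the two steps gives
\[
\langle\nabla J_{\mathrm{acc}},\nabla J_{\mathrm{cal}}\rangle_{F^{-1}}=-\tfrac{\partial l}{\partial c}\operatorname{Cov}_{\pi_\theta}(R,\phi).
\]
This is strictly negative by the over-confidence assumption and the positive-covariance assumption. The hypothesis $\nabla_\theta\mathrm{Conf}_\theta\neq0$ rules out the degenerate case where $b$ projects to zero, which guarantees that the calibration gradient is nontrivial.

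\textbf{Main obstacle.} The main obstacle is the dependence of $l$ on $t=J_{\mathrm{acc}}(\theta)$. If the target is not stopped, an extra term $-\partial_t l\,\nabla J_{\mathrm{acc}}$ appears, contributing $-\partial_t l\,\operatorname{Var}(R)$ to the inner product. For losses of the form $l=\psi(c-t)$ we have $\partial_t l=-\partial_c l$, so this contribution is $+\partial_c l\,\operatorname{Var}(R)>0$ and could overturn the sign. I would therefore make the stop-gradient convention explicit. Alternatively, I would state the result with the additional condition $\operatorname{Cov}(R,\phi)>\operatorname{Var}(R)$ when the target is differentiated through. The secondary obstacle is justifying the span assumption of Step 2 for non-tabular models.
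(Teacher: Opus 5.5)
Your argument follows the paper's own route: score-function forms of both gradients, the chain rule for $J_{\mathrm{cal}}$, and the identity $\mathbb{E}[(R-\mathbb{E}R)g]^\top F^{-1}\mathbb{E}[(\phi-\mathbb{E}\phi)g]=\operatorname{Cov}_{\pi_\theta}(R,\phi)$. The two hypotheses you add are exactly the ones the paper's proof uses without stating them: its chain-rule step writes $\nabla_\theta J_{\mathrm{cal}}=-\frac{\partial l}{\partial c}\nabla_\theta \mathrm{Conf}_\theta(x)$, silently dropping the $-\partial_t l\,\nabla_\theta J_{\mathrm{acc}}$ term, and it asserts the covariance identity with no tabular or score-span condition.

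Both additions are necessary, so your qualified version is the right statement. First, take two outcomes with $\pi_1=\sigma(\theta)$, $R=(1,0)$, $\phi=(1,0.9)$ and $l=(c-t)^2$. Every stated hypothesis holds, yet the fully differentiated inner product is $\partial_c l\,(\operatorname{Var}R-\operatorname{Cov}(R,\phi))=1.8\,(c-t)\,\pi_1\pi_2>0$, because raising $\pi_1$ both increases accuracy and shrinks the gap $c-t=0.9\,\pi_2$. Second, take the one-parameter family $\pi_\theta\propto e^{\theta s_0}$ on three points with $s_0=(1,-1,0)$ at $\theta=0$, $R=(1,0,0)$ and $\phi=(0.6,0.9,0)$. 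The model is over-confident and $\operatorname{Cov}(R,\phi)=1/30>0$, but $\langle\nabla_\theta J_{\mathrm{acc}},\nabla_\theta \mathrm{Conf}_\theta\rangle_{F^{-1}}=\mathbb{E}[Rs_0]\,\mathbb{E}[\phi s_0]/\mathbb{E}[s_0^2]=-0.05$, so the claimed inequality fails even with the stop-gradient.
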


\begin{proof}
By the policy gradient theorem,
\[
\nabla_\theta J_{\mathrm{acc}}(\theta)
=
\mathbb{E}_{y\sim\pi_\theta}[R(y) g(y)],
\qquad
\nabla_\theta Conf_\theta(x)
=
\mathbb{E}_{y\sim\pi_\theta}[\phi(y) g(y)].
\]
Since $\mathbb{E}_{y\sim\pi_\theta}[g(y)]=0$, these can be written as
\[
\nabla_\theta J_{\mathrm{acc}}(\theta)
=
\mathbb{E}[(R-\mathbb{E} R) g],
\qquad
\nabla_\theta Conf_\theta(x)
=
\mathbb{E}[(\phi-\mathbb{E} \phi) g].
\]

By Assumption~3 and the chain rule,
\[
\nabla_\theta J_{\mathrm{cal}}(\theta)
=
-\,\frac{\partial l}{\partial c}\,
\nabla_\theta Conf_\theta(x).
\]
Under Assumption~1, we have $\frac{\partial l}{\partial c}>0$.

Define the natural gradient direction of the confidence as
\[
u := F^{-1}\nabla_\theta Conf_\theta(x).
\]
Then
\[
\bigl\langle
\nabla_\theta J_{\mathrm{acc}}(\theta),\,
\nabla_\theta Conf_\theta(x)
\bigr\rangle_{F^{-1}}
=
\nabla_\theta J_{\mathrm{acc}}(\theta)^\top
F^{-1}
\nabla_\theta Conf_\theta(x).
\]

Substituting the gradient expressions yields
\[
\mathbb{E}[(R-\mathbb{E} R) g]^\top
F^{-1}
\mathbb{E}[(\phi-\mathbb{E} \phi) g]
=
\operatorname{Cov}_{\pi_\theta}(R(y),\phi(y)),
\]
which is strictly positive by Assumption~2.

Therefore,
\[
\bigl\langle
\nabla_\theta J_{\mathrm{acc}}(\theta),\,
\nabla_\theta J_{\mathrm{cal}}(\theta)
\bigr\rangle_{F^{-1}}
=
-\,\frac{\partial l}{\partial c}\,
\operatorname{Cov}_{\pi_\theta}(R(y),\phi(y))
< 0,
\]
which completes the proof.
\end{proof}

\subsection{Proof of Theorem 1 (Optimality of Decoupled Confidence Estimation)}
\label{apdx1.3}

\begin{proposition}
    Let $c \in [0, 1]$ denote the predicted confidence, and let $l(c, r)$ be a strictly proper scoring rule. Then the minimizer of
    $$\mathbb{E}_{y \sim \pi_{\theta}(\cdot|x)}\mathbb{E}_{c \sim \pi_{\theta}(\cdot|x, y)}[l(c, R(y))]$$ satisfies 
    $$\mathbb{E}[c|x] = \mathbb{E}_{y\sim \pi_{\theta}(\cdot|x)}[R(y)]$$
\end{proposition}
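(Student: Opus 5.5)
The plan is to treat the reasoning policy $\pi_\theta(\cdot\mid x)$ as fixed, and minimize only over the confidence kernel $\pi_\theta(c\mid x,y)$. This is exactly what the masked gradient in \eqref{eq:loss_fun} justifies: the confidence advantage $A_{c,i}$ only touches the tokens in $o_c$. Once the reasoning policy is frozen, the objective splits, by the tower rule, into an outer expectation over $y$ and an inner expected loss $\mathbb{E}_{c\sim\pi_\theta(\cdot\mid x,y)}[\ell(c,R(y))]$. I will minimize this inner loss separately for each $y$, so the whole problem becomes a family of pointwise problems indexed by $y$.

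\textbf{Step 1 (pointwise strict propriety).} For a fixed conditioning variable, let $p$ denote the conditional probability that $R=1$ given the information available to the confidence head. Write $S(c;p):=p\,\ell(c,1)+(1-p)\,\ell(c,0)$ for the expected score. Strict propriety of $\ell$ means exactly that $S(\cdot;p)$ has the unique minimizer $c=p$ on $[0,1]$.

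\textbf{Step 2 (randomized predictors do no better).} For any distribution $\mu$ over $c$, I will show
\[
\mathbb{E}_{c\sim\mu}\,S(c;p)\ \ge\ S(p;p),
\]
with equality iff $\mu=\delta_p$. This follows because $S(c;p)\ge S(p;p)$ for every $c$, with strict inequality off $c=p$. Hence the optimal kernel is the Dirac mass at the conditional mean of $R$.

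\textbf{Step 3 (apply to both information structures).} If the confidence head sees the full trajectory $y$, as it does in DCPO since $o_c$ is generated after $o_r$, then $R(y)$ is deterministic given $y$. The optimal predictor is then $c^\ast(y)=R(y)$. If the head only sees $x$, or is restricted to not distinguish trajectories, then $c^\ast=\mathbb{E}_{y\sim\pi_\theta(\cdot\mid x)}[R(y)]$.

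\textbf{Step 4 (average over $y$).} In both cases I will take the expectation over $y\sim\pi_\theta(\cdot\mid x)$ and use the tower property. This gives
\[
\mathbb{E}[c\mid x]=\mathbb{E}_y\bigl[\mathbb{E}[R\mid \text{info}]\bigr]=\mathbb{E}_{y\sim\pi_\theta(\cdot\mid x)}[R(y)],
\]
which is the claim. This also explains how Proposition~\ref{prop:group-unbiased} fits in: the hybrid target $R_{IG}$ has conditional mean equal to the same quantity. That is because $\tilde R_G$ is unbiased, and $\mathbb{E}[R(o_r)]=\mathbb{E}[R]$.

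\textbf{Main obstacle.} I expect the main obstacle to be justifying the decoupling rigorously. The outer distribution over $y$ must not move while $c$ is optimized; otherwise the minimization is joint over $(y,c)$ and the gradient conflict of Proposition~\ref{prop:grad-conflict} reappears. I would handle this by stating it as an explicit assumption: the parameters, or at least the gradient flow, are block-separated by the token mask, so the reasoning policy is held fixed. Under that assumption the argument above is just an exchange of minimization and expectation.

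A smaller technical point is measurability and attainability: the kernel $\pi_\theta(c\mid x,y)$ must be able to realize the Dirac mass at $c^\ast$. I would either assume the confidence parameterization is expressive enough to do so, or state the result for the infimum over kernels. A further point is the absolute loss $\ell(c,r)=|c-r|$ actually used in DCPO, which is proper but not strictly proper and whose minimizer is the median. The theorem as stated requires strictness, so I will restrict to strictly proper rules and note this caveat rather than attempt to cover it.
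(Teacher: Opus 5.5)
Your argument is correct and more careful than the paper's, though the route differs.

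\textbf{The paper's route.} It treats $R(y)$ as a $\mathrm{Bern}(p^*)$ variable with $p^*=\mathbb{E}_{y\sim\pi_\theta}[R(y)]$. It applies strict propriety to get the single minimizer $c=p^*$. It then invokes the confidence being ``optimized independently'' of $\pi_\theta(y\mid x)$ to conclude $\mathbb{E}[c\mid x]=p^*$. Minimizing $\mathbb{E}_{R\sim\mathrm{Bern}(p^*)}[\ell(c,R)]$ over a single scalar $c$ presupposes that $c$ does not depend on $y$. So the paper's argument is exactly your $x$-only case in Step~3, with deterministic $c$.

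\textbf{Your route.} The pointwise-in-$y$ treatment covers the setup as actually written, $c\sim\pi_\theta(\cdot\mid x,y)$. There the optimum is $c^*(y)=R(y)$ rather than $p^*$, and the identity appears only after averaging through the tower property. Step~2 also disposes of randomized kernels, which the paper never addresses.

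\textbf{What each buys.} The paper's route is shorter and supports the reading $c\approx p^*$ that motivates the group-level target $\tilde R_G$. Yours shows the identity holds for every information structure between $\sigma(x)$ and $\sigma(x,y)$. It also makes clear that with a $y$-aware head the identity is only an averaged statement.

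\textbf{Two corrections to your side remarks.}
\begin{itemize}
\item The ``main obstacle'' is milder than you suggest. For \emph{any} fixed $\pi_\theta(y\mid x)$, the optimal confidence kernel satisfies the identity. So you only need that the confidence kernel can be varied without moving $\pi_\theta(y\mid x)$. Freezing the reasoning policy matters for the paper's non-interference claim, not for the identity.
\item $\ell(c,r)=|c-r|$ is not merely non-strictly proper; it is improper for binary $r$. Indeed $\mathbb{E}_{R\sim\mathrm{Bern}(p)}|c-R|=p+c(1-2p)$ is linear in $c$ and is minimized at the median $\mathbb{I}[p>\tfrac12]$. That report strictly beats $c=p$ whenever $p\notin\{0,\tfrac12,1\}$.
\end{itemize}
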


\begin{proof}
    For fixed $x$, with policy $\pi_{\theta}(y|x)$, the correctness reward $R(y)$ is Bernouli distribution with mean
    $$p^{*} = \mathbb{E}_{y\sim \pi_{\theta}}[R(y)].$$
    For a proper calibration estimation function $l(c, r)$ which reaches its minimum at $c = r$, the expected loss 
    $$\mathbb{E}_{R\sim Bern(p^{*})}[l(c, R)]$$
    is uniquely minimized at $c = p^{*}.$

    Since $\pi_{\theta}(c|x,y)$ is optimized independenty of the generation policy $\pi_{\theta}(y|x)$, the expected confidence $\mathbb{E}[c|x]$ converges to the unique minimizer $p^{*}$ without affect $\pi_{\theta}(y|x)$ explicitly.

    Therefore,
    $$\mathbb{E}[c|x] = \mathbb{E}_{y\sim \pi_\theta}[R(y)].$$

\end{proof}

Decoupled confidence estimation yields statistically consistent calibration without altering the optimality conditions of the generation policy.

\subsection{Proof of Proposition 3 (Unbiasedness and Variance of Group-Level Accuracy)}
\label{apdx1.4}

\begin{proposition}
    Let $$\{y_{i}\}_{i = 1}^{G}\sim \pi_{\theta}(\cdot|x)$$ be indenpendent sampling in a GRPO group, and define 
    $$\tilde{R}_{G} = \frac{1}{G}\sum_{i = 1}^{G}R(y_{i}).$$
    Then $\tilde{R}_{G}$ is an unbiased estimator of $\mathbb{E}[R(y)]$ with variance $O(1/G)$.
\end{proposition}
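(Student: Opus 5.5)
The plan is to use linearity of expectation for unbiasedness and independence for the variance computation, working conditionally on the fixed input $x$ throughout. First, note that each $y_i$ is drawn from the same distribution $\pi_\theta(\cdot\mid x)$, so the random variables $R(y_1),\dots,R(y_G)$ are identically distributed. Since $R(y)\in\{0,1\}$, each is Bernoulli with parameter
\[
p := \mathbb{E}_{y\sim\pi_\theta(\cdot\mid x)}[R(y)] = J_{\mathrm{acc}}(\theta),
\]
and this parameter is the same quantity that appears in \eqref{eq:acc-obj}. Boundedness in $[0,1]$ guarantees that all first and second moments exist, so no integrability issues arise even when $\mathcal{Y}$ is countably infinite.

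For unbiasedness, apply linearity of expectation:
\[
\mathbb{E}[\tilde{R}_G] = \frac{1}{G}\sum_{i=1}^G \mathbb{E}[R(y_i)] = \frac{1}{G}\cdot G p = p = \mathbb{E}[R(y)].
\]
This step uses only that the samples are identically distributed; it does not require independence.

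For the variance, expand $\operatorname{Var}(\tilde{R}_G) = \frac{1}{G^2}\bigl(\sum_i \operatorname{Var}(R(y_i)) + \sum_{i\neq j}\operatorname{Cov}(R(y_i),R(y_j))\bigr)$. Independence of the draws within a GRPO group makes every cross-covariance vanish. What remains is
\[
\operatorname{Var}(\tilde{R}_G) = \frac{1}{G}\operatorname{Var}(R(y)) = \frac{p(1-p)}{G} \le \frac{1}{4G},
\]
which is $O(1/G)$ with a constant that is uniform in $\theta$ and $x$.

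The only point needing care is the independence assumption. In practice GRPO samples the $G$ rollouts independently from the same frozen policy at a given step, so the statement's hypothesis holds. If one wanted robustness, the fallback is to note that under merely pairwise uncorrelated samples the same computation goes through unchanged. Beyond that, the argument is routine.
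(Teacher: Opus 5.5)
Your proof is correct and matches the paper's argument: linearity of expectation gives unbiasedness, and independence of the $G$ rollouts removes the cross-covariances, so that $\operatorname{Var}(\tilde{R}_G)=\frac{1}{G}\operatorname{Var}(R(y))$. Your extra bound $p(1-p)\le 1/4$ makes the $O(1/G)$ constant uniform in $\theta$ and $x$, a small refinement the paper leaves implicit.
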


\begin{proof}
Unbiasedness straightforward follows:
$$\mathbb{E}[\tilde{R}_{G}] = \frac{1}{G} \sum_{i = 1}^{G}\mathbb{E}[R(y_{i})] = \mathbb{E}[R(y)].$$

Since $R(y_{}i)$ are i.i.d. Bernouli variables,

$$Var(\tilde{R}_{G}) = \frac{1}{G^{2}}\sum_{i = 1}^{G}Var(R(y_{i})) = \frac{1}{G}Var(R(y)).$$

Thus, the variance decays at rate $O(1/G)$.
\end{proof}

\begin{proposition}
When using the absolute calibration loss $\ell(c,r)=|c-r|$, using group-level correctness
$\tilde R_G$ as supervision reduces gradient variance compared to instance-level correctness.
\end{proposition}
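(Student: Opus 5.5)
I plan to compare, for a fixed input $x$ and a fixed predicted confidence $c\in(0,1)$, the variance of the (sub)gradient of the absolute loss with respect to $c$ under the two kinds of supervision. The randomness comes from the rollouts $y_1,\dots,y_G\sim\pi_\theta(\cdot\mid x)$, which are i.i.d., as in Proposition~\ref{prop:group-unbiased}, so $R(y_i)\sim\mathrm{Bern}(p)$ with $p=\mathbb{E}_{y\sim\pi_\theta}[R(y)]$. Let $g(c,r)$ be any measurable selection from $\partial_c|c-r|$. Then $g(c,r)=\operatorname{sign}(c-r)$ whenever $c\neq r$, and $|g|\le 1$ in every case. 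The claim should be read as
\[
\operatorname{Var}\bigl(g(c,\tilde R_G)\bigr)\le \operatorname{Var}\bigl(g(c,R(y))\bigr),
\]
with strict inequality in the nondegenerate regime.

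\textbf{Instance-level term.} Since $c\in(0,1)$ and $R(y)\in\{0,1\}$, we never have $c=R(y)$. So $g(c,R(y))$ equals $+1$ when $R=0$ and $-1$ when $R=1$; that is, $g=1-2R$. This gives
\[
\operatorname{Var}\bigl(g(c,R(y))\bigr)=4\,\operatorname{Var}(R(y))=4p(1-p),
\]
which is strictly positive for $p\in(0,1)$.

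\textbf{Group-level term.} $\tilde R_G$ takes values in $\{0,1/G,\dots,1\}$, and $g(c,\tilde R_G)=\operatorname{sign}(c-\tilde R_G)$ off the event $\{\tilde R_G=c\}$. Write $q=\Pr(\tilde R_G<c)$ and $s=\Pr(\tilde R_G>c)$. On the tie event the subgradient lies in $[-1,1]$. A direct comparison of two-point distributions is cleanest in the idealized limit the paper adopts in Proposition~\ref{prop:group-var}, where the group mean is treated as its expectation $p$. There $g(c,\tilde R_G)$ is a deterministic constant for $c\neq p$ and has variance $0$.

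For finite $G$ I would instead argue through concentration. By Proposition~\ref{prop:group-unbiased} and Chebyshev (or Hoeffding), $\min(q,s)\le \exp(-2G(c-p)^2)$ whenever $c\neq p$. The variance of a $[-1,1]$-valued variable that equals a fixed sign except on an event of probability $\epsilon$ is at most $4\epsilon$. This is eventually below $4p(1-p)$, and it decays to $0$ as $G\to\infty$.

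The step I expect to be hardest is the finite-$G$ comparison when $c$ is close to $p$, where the group sign is itself nearly a fair coin. There the inequality is not uniform, and I would state the result either in the idealized form, or for $G$ large relative to $(c-p)^{-2}$. Finally, for the equality case I would note that when $p\in\{0,1\}$ both variables are almost surely constant, so both variances vanish. Conversely, for $p\in(0,1)$ the instance-level variance $4p(1-p)$ is strictly positive, so the reduction is strict exactly when $p\notin\{0,1\}$.
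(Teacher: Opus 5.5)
Your argument is correct and matches the paper's proof: $g(c,R)=1-2R$ gives variance $4p(1-p)$ at the instance level, while identifying the group target with $p$ (the paper's $\mathbb{E}[R(y)]=p:=\tilde R_G$) makes the subgradient deterministic for $c\neq p$, with equality exactly when $p\in\{0,1\}$. Your finite-$G$ concentration remark goes beyond the paper and rightly notes that without this identification the inequality can fail for small $G$ with $c$ near $p$; if you keep it, apply Hoeffding (Chebyshev gives only a polynomial bound) to $\Pr(\tilde R_G\ge c)$ for $c>p$, so the tie event, which $\min(q,s)$ omits, is covered.
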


\begin{proof}
Let $R(y)\in\{0,1\}$ denote the instance-level correctness indicator.
We assume $R(y)$ is an i.i.d. Bernoulli random variable with
\[
\mathbb{E}[R(y)] = p := \tilde R_G ,
\]
where $\tilde R_G$ denotes the group-level accuracy.

\paragraph{Subgradient of the absolute loss.}
The absolute loss is not differentiable at $c=r$.
Let $g(c,r)\in\partial_c |c-r|$ be an arbitrary measurable selection from the subdifferential
\[
\partial_c |c-r|=
\begin{cases}
\{1\}, & c>r,\\
[-1,1], & c=r,\\
\{-1\}, & c<r.
\end{cases}
\]
For any valid selection, we have $|g(c,r)|\le 1$.
In the following, we restrict to $c\in(0,1)$ and $c\neq p$,
so that $c\neq r$ almost surely.

\paragraph{Gradient variance with instance-level supervision.}
When supervising with instance-level correctness $R(y)$, the subgradient
$g(c,R(y))$ is a random variable taking values in $\{-1,1\}$.
Since $R(y)\sim\mathrm{Bernoulli}(p)$, we have
\[
\mathbb{P}\big(g(c,R(y))=1\big)=1-p,
\qquad
\mathbb{P}\big(g(c,R(y))=-1\big)=p.
\]
Therefore,
\[
\mathbb{E}[g(c,R(y))]=1-2p,
\qquad
\mathbb{E}[g(c,R(y))^2]=1,
\]
and the variance is
\[
\mathrm{Var}(g(c,R(y)))
=1-(1-2p)^2
=4p(1-p).
\]

\paragraph{Gradient variance with group-level supervision.}
When supervising with group-level correctness $\tilde R_G=p$,
the loss reduces to $\ell(c,\tilde R_G)=|c-p|$.
For $c\neq p$, the corresponding subgradient $g(c,\tilde R_G)$ is deterministic,
which implies
\[
\mathrm{Var}(g(c,\tilde R_G))=0.
\]

\paragraph{Comparison.}
Combining the above results, we obtain
\[
\mathrm{Var}(g(c,R(y))) = 4p(1-p)
\;\ge\;
0 = \mathrm{Var}(g(c,\tilde R_G)),
\]
with equality if and only if $p\in\{0,1\}$,
i.e., when all instances in the group are either correct or incorrect.

This completes the proof.
\end{proof}

\section{Training and Evaluation Details}
\label{apdx2}

\subsection{Training Setup}
All reinforcement learning experiments are conducted using the \textbf{VERL} framework on a cluster of \textbf{8 NVIDIA A100 (80GB)} GPUs. Table~\ref{tab:training_hyperparams} summarizes the key hyperparameters used throughout our experiments.  

\begin{table}[t]
\centering
\begin{tabular}{lc}
\toprule
Parameter & Value \\
\midrule
Base model & Qwen3-8B \\
Max prompt length & 1024 \\
Max response length & 3000 \\
Total epochs & 5 \\
Learning rate & $1 \times 10^{-6}$ \\
Training batch size & 256 \\
KL loss coefficient & 0 \\
Rollout group size ($G$) & 8 \\
Rollout temperature & 1.0 \\
Clip ratio (low) & 0.20 \\
Clip ratio (high) & 0.28 \\
\bottomrule
\end{tabular}
\caption{Key hyperparameters used in reinforcement learning training.}
\label{tab:training_hyperparams}
\end{table}

For \textbf{GRPO} and \textbf{CCGSPG}, models are trained directly on the original DeepScaler dataset. For methods requiring explicit confidence supervision (e.g., \textbf{RLCR} and \textbf{DCPO}), prompts are modified to instruct the model to output a confidence score following its final answer. The tradeoff hyperparameter $\lambda$ in DCPO is set to 0.5.

Post-training calibration methods are initialized from the GRPO checkpoint at step 120 and further trained for an additional 80 steps using only Brier Score calibration rewards.

\subsection{Confidence Estimation Baseline}
For the \textbf{ConfClass} baseline, confidence is predicted using a three-layer MLP with hidden dimension 2048. Training data are generated by sampling outputs from the GRPO-120 checkpoint on the DeepScaler dataset over 20 independent runs. Each sample includes the top-20 token log-probabilities as input features, with the corresponding group-level average accuracy as supervision. The model is trained using cross-entropy loss with a learning rate of $1 \times 10^{-4}$.

\subsection{Evaluation Protocol}
Unless otherwise specified, all results are obtained from the 120-step checkpoint of each training run.  

Confidence is extracted in two ways:  
\begin{itemize}
    \item \textbf{(logits)}: computed as the sequence generation probability by aggregating token-level log-probabilities during decoding.  
    \item \textbf{(verbal)}: extracted from model outputs explicitly prompted to provide a confidence score following the \texttt{<conf>} tag.
\end{itemize}

During evaluation, responses are sampled using temperature 0.7, top-$p$ 0.8, and top-$k$ 20~\citep{yang2025qwen3}. To reduce variance on smaller evaluation sets (e.g., AIME24, AMC23), each evaluation is repeated four times with independent sampling, and results are averaged.

Task accuracy is computed using the \textbf{OpenCompass} framework. Calibration metrics, including Expected Calibration Error (ECE) and Brier Score (BS), are calculated directly from confidence–accuracy pairs. AUROC is computed using the standard \textbf{scikit-learn} implementation.

\subsection{Verbalized Confidence Prompts}
To enable explicit confidence prediction, training and evaluation prompts are augmented with instructions requiring a floating-point confidence output between 0 and 1. The prompt format is as follows:

\colorbox{gray!20}{
\parbox{\linewidth}{
Please put your final answer within \texttt{\textbackslash boxed\{\}}. \\
Also output your confidence for this answer after \texttt{<conf>} as a floating-point number between 0 (complete uncertainty) and 1 (full confidence). \\
\textbf{Example}: \texttt{<conf> Confidence: 0.83} \\
Ensure that the confidence output appears on a single line and strictly follows the specified format.
}}

\section{Additional Experiments}
\subsection{Generalization Across Model Families and Scales}

To further evaluate the generalization ability of DCPO, we conduct additional experiments on multiple model families and scales, including Llama3.1-8B-Instruct and Qwen3-14B (non-thinking). Training and evaluation settings follow those in the main paper, and all models are trained on the DeepScaler dataset.

\begin{table*}[t]
\centering
\caption{Mathematical reasoning performance across different model families and sizes.}
\resizebox{\textwidth}{!}{
\begin{tabular}{lcccccccccccc}
\toprule
\multirow{2}{*}{Model} & \multirow{2}{*}{Method}
& \multicolumn{2}{c}{MATH}
& \multicolumn{2}{c}{AIME24}
& \multicolumn{2}{c}{AIME25}
& \multicolumn{2}{c}{AMC23}
& \multicolumn{2}{c}{AMC24} \\

\cmidrule(lr){3-4}
\cmidrule(lr){5-6}
\cmidrule(lr){7-8}
\cmidrule(lr){9-10}
\cmidrule(lr){11-12}

& 
& Acc & ECE
& Acc & ECE
& Acc & ECE
& Acc & ECE
& Acc & ECE \\
\midrule
Llama3.1-8B-Instruct & Base & 0.214 & 0.481 & 0.033 & 0.710 & 0.000 & 0.645 & 0.089 & 0.635 & 0.078 & 0.670 \\
Llama3.1-8B-Instruct & GRPO & 0.363 & 0.351 & 0.050 & 0.731 &\textbf{0.044} &0.582 & 0.133 & 0.593 & 0.115 & 0.492 \\
Llama3.1-8B-Instruct & RLCR & 0.288 & 0.214 & 0.033 & 0.372 & 0.000 & 0.387 & 0.100 & 0.301 & 0.107 & 0.398 \\
\ours Llama3.1-8B-Instruct & DCPO & \textbf{0.365} & \textbf{0.203} & \textbf{0.050} & \textbf{0.367} & 0.033 & \textbf{0.335} & \textbf{0.133} & \textbf{0.297} & \textbf{0.122} & \textbf{0.371} \\
\midrule
Qwen3-14B & Base & 0.893 & 0.068 & 0.325 & 0.476 & 0.250 & 0.452 & 0.657 & 0.189 & 0.578 & 0.267 \\
Qwen3-14B & GRPO & \textbf{0.950} & 0.051 & 0.416 & 0.368 & 0.383 & 0.310 & 0.767 & 0.113 & 0.700 & 0.241 \\
Qwen3-14B & RLCR & 0.926 & 0.036 & 0.383 & 0.224 & 0.333 & \textbf{0.228} & 0.717 & \textbf{0.084} & 0.633 & 0.128 \\
\ours Qwen3-14B & DCPO & 0.942 & \textbf{0.033} & \textbf{0.433} & \textbf{0.211} & \textbf{0.383} & 0.230 & \textbf{0.775} & 0.088 & \textbf{0.692} & \textbf{0.114} \\
\bottomrule
\end{tabular}
}
\label{tab:appendix_math_models}
\end{table*}

As shown in Table ~\ref{tab:appendix_math_models}, across different model families and scales, DCPO consistently improves calibration while maintaining competitive or improved accuracy. These results further demonstrate the robustness and generalization ability of DCPO across architectures and parameter scales.

\subsection{Sensitivity to Structured Prompts}

DCPO relies on structured confidence outputs during training. To ensure that this requirement does not introduce instability, we incorporate a structural penalty of $-1.0$, which is substantially larger than the calibration reward magnitude and thus strongly discourages malformed outputs.

\begin{table}[h]
\centering
\caption{Format violation rate during training steps.}
\begin{tabular}{cccccccc}
\toprule
Step & 0 & 20 & 40 & 60 & 80 & 100 & 120 \\
\midrule
Violation Rate & 1.60\% & 0.58\% & 0.20\% & 0.21\% & 0.17\% & 0.20\% & 0.15\% \\
\bottomrule
\end{tabular}
\label{tab:format}
\end{table}

To further evaluate robustness, we measure the frequency of format violations throughout training. As shown in Table~\ref{tab:format}, the violation rate is already low at initialization (1.60\%) and rapidly decreases to approximately 0.2\% after only a few training steps. This trend indicates that the model quickly learns the required output structure and maintains stable formatting behavior during training.

Overall, these results suggest that DCPO is robust to structured prompt requirements, and the structural penalty effectively enforces formatting consistency without introducing instability.

\subsection{Hyperparameter $\lambda$ Sensitivity}
DCPO introduces a hybird coefficient $\lambda$ between group-level and instance-level calibration objectives. In the main paper, we report comparisons among DCPO ($\lambda = 0.5$), DCPO-I ($\lambda = 0$), and DCPO-G ($\lambda = 1.0$), showing that $\lambda = 0.5$ achieves a favorable balance between accuracy and calibration.

\begin{table}[h]
\centering
\caption{Sensitivity to $\lambda$ in DCPO.}
\begin{tabular}{ccc}
\toprule
$\lambda$ & Acc & ECE \\
\midrule
0 & 0.587 & 0.138 \\
0.25 & 0.594 & 0.130 \\
0.5 & 0.608 & \textbf{0.128} \\
0.75 & \textbf{0.611} & 0.167 \\
1 & 0.605 & 0.209 \\
\bottomrule
\end{tabular}
\label{tab:lambda}
\end{table}

To further investigate sensitivity, we conduct additional experiments by varying $\lambda$ across a wider range. The results are summarized in Table~\ref{tab:lambda}. We observe that $\lambda$ values between 0.25 and 0.5 achieve the best calibration, while larger $\lambda$ improve training stability by reducing gradient variances. These results suggest that $\lambda = 0.5$ provides the best tradeoff. Larger values of $\lambda$ improve training stability but lead to degraded calibration performance.

These results indicate that DCPO is relatively robust to the choice of $\lambda$ within a reasonable range, and $\lambda = 0.5$ provides a strong tradeoff between accuracy, calibration, and training stability.

\subsection{Impact on Output Length}
Since DCPO introduces verbalized confidence outputs, one potential concern is that this additional supervision may increase generation length and computational overhead. To analyze this effect, we measure the average output length of models trained with GRPO and DCPO.

\begin{table}[h]
\centering
\caption{Average generation length.}
\begin{tabular}{cccc}
\toprule
Method & Base & GRPO & DCPO \\
\midrule
Length & 1033.63 & 1299.87 & 1292.21 \\
\bottomrule
\end{tabular}
\label{tab:length}
\end{table}

\begin{figure}[t]
\centering
\includegraphics[width=0.5\linewidth]{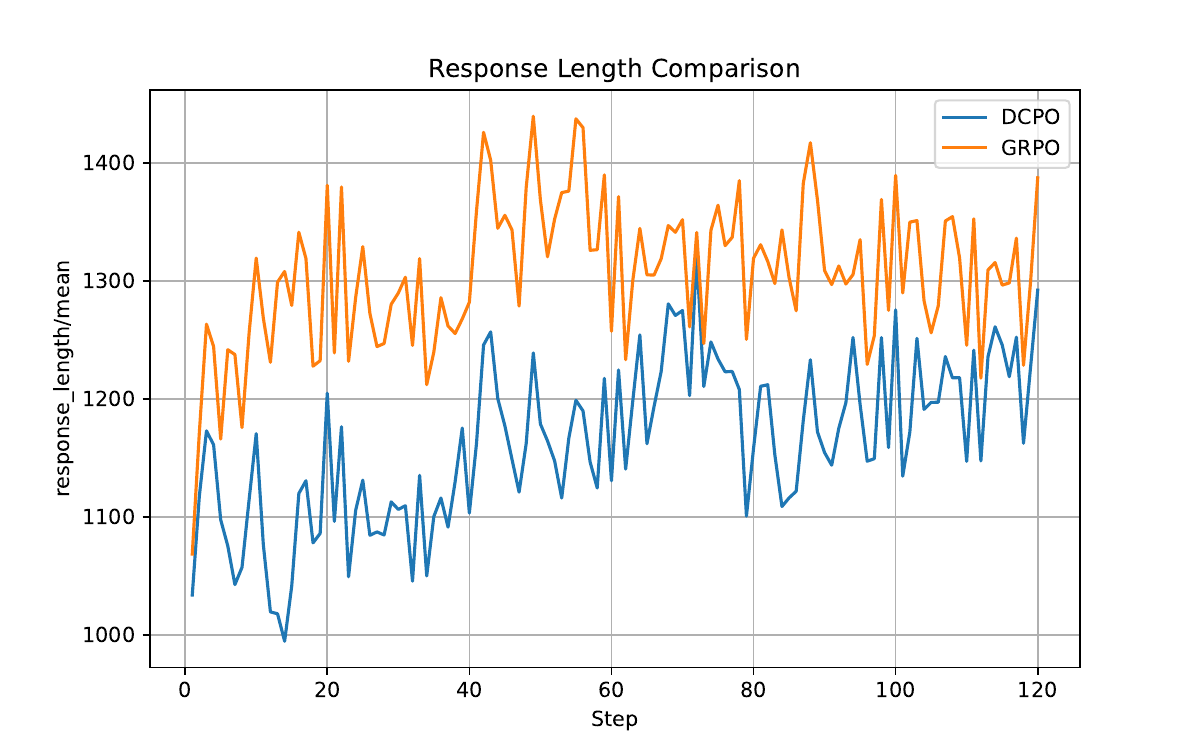}
\caption{Generation length during training.}
\label{fig:length}
\end{figure}

Table~\ref{tab:length} reports the average generation length, while Figure~\ref{fig:length} illustrates the training dynamics. We observe that both GRPO and DCPO increase generation length compared to the base model, which is expected due to reinforcement learning optimization. However, DCPO produces responses with similar or slightly shorter length compared to GRPO.

These results suggest that incorporating verbalized confidence introduces negligible overhead, and DCPO maintains comparable generation efficiency to standard RL training.

\end{document}